\newcommand{\diff}{\mathrm{d}}
\newcommand{\Expectation}{\mathbb{E}}
\newcommand{\btheta}{\boldsymbol{\theta}}
\newcommand{\loss}{\mathcal{L}}
\newcommand{\nn}{\nonumber}
\newcommand{\calN}{\mathcal{N}}
\newcommand{\calP}{\mathcal{P}}
\newcommand{\bI}{\mathbf{I}}
\newcommand{\bs}{\mathbf{s}}
\newcommand{\bw}{\mathbf{w}}
\newcommand{\bx}{\mathbf{x}}
\newcommand{\rmd}{\mathrm{d}}
\newcommand{\bbE}{\mathbb{E}}
\newcommand{\bbP}{\mathbb{P}}
\newcommand{\bbR}{\mathbb{R}}
\DeclareMathAlphabet{\mathbsf}{OT1}{cmss}{bx}{n}
\DeclareMathAlphabet{\mathssf}{OT1}{cmss}{m}{sl}% slanted sans serif
\DeclareSymbolFont{bsfletters}{OT1}{cmss}{bx}{n}  
\DeclareSymbolFont{ssfletters}{OT1}{cmss}{m}{n}
\DeclareMathSymbol{\bsfGamma}{0}{bsfletters}{'000}
\DeclareMathSymbol{\ssfGamma}{0}{ssfletters}{'000}
\DeclareMathSymbol{\bsfDelta}{0}{bsfletters}{'001}
\DeclareMathSymbol{\ssfDelta}{0}{ssfletters}{'001}
\DeclareMathSymbol{\bsfTheta}{0}{bsfletters}{'002}
\DeclareMathSymbol{\ssfTheta}{0}{ssfletters}{'002}
\DeclareMathSymbol{\bsfLambda}{0}{bsfletters}{'003}
\DeclareMathSymbol{\ssfLambda}{0}{ssfletters}{'003}
\DeclareMathSymbol{\bsfXi}{0}{bsfletters}{'004}
\DeclareMathSymbol{\ssfXi}{0}{ssfletters}{'004}
\DeclareMathSymbol{\bsfPi}{0}{bsfletters}{'005}
\DeclareMathSymbol{\ssfPi}{0}{ssfletters}{'005}
\DeclareMathSymbol{\bsfSigma}{0}{bsfletters}{'006}
\DeclareMathSymbol{\ssfSigma}{0}{ssfletters}{'006}
\DeclareMathSymbol{\bsfUpsilon}{0}{bsfletters}{'007}
\DeclareMathSymbol{\ssfUpsilon}{0}{ssfletters}{'007}
\DeclareMathSymbol{\bsfPhi}{0}{bsfletters}{'010}
\DeclareMathSymbol{\ssfPhi}{0}{ssfletters}{'010}
\DeclareMathSymbol{\bsfPsi}{0}{bsfletters}{'011}
\DeclareMathSymbol{\ssfPsi}{0}{ssfletters}{'011}
\DeclareMathSymbol{\bsfOmega}{0}{bsfletters}{'012}
\DeclareMathSymbol{\ssfOmega}{0}{ssfletters}{'012}
\newcommand{\bepsilon}{\bm{\epsilon}}
\newcommand{\bxi}{\bm{\xi}}
\newtheorem{theorem}{Theorem} 
\newtheorem{lemma}{Lemma}
\newtheorem{assumption}{Assumption}
\newtheorem{remark}{Remark}
\newcommand{\qednew}{\nobreak \ifvmode \relax \else
      \ifdim\lastskip<1.5em \hskip-\lastskip
      \hskip1.5em plus0em minus0.5em \fi \nobreak
      \vrule height0.75em width0.5em depth0.25em\fi}
\definecolor{cvprblue}{rgb}{0.21,0.49,0.74}
\title{Accelerating Diffusion Sampling with Optimized Time Steps}
\author{Shuchen Xue$^1$, Zhaoqiang Liu$^2$\thanks{Corresponding author.}, Fei Chen$^3$, Shifeng Zhang$^3$, Tianyang Hu$^3$, Enze Xie$^3$, Zhenguo Li$^3$\\ 
$^1$ Academy of Mathematics and Systems Science, Chinese Academy of Sciences \\
$^2$ University of Electronic Science and Technology of China \\
$^3$ Huawei Noah's Ark Lab}
\begin{document}

\maketitle
\begin{abstract}
Diffusion probabilistic models (DPMs) have shown remarkable performance in high-resolution image synthesis, but their sampling efficiency is still to be desired due to the typically large number of sampling steps. Recent advancements in high-order numerical ODE solvers for DPMs have enabled the generation of high-quality images with much fewer sampling steps. While this is a significant development, most sampling methods still employ uniform time steps, which is not optimal when using a small number of steps. To address this issue, we propose a general framework for designing an optimization problem that seeks more appropriate time steps for a specific numerical ODE solver for DPMs. This optimization problem aims to minimize the distance between the ground-truth solution to the ODE and an approximate solution corresponding to the numerical solver. It can be efficiently solved using the constrained trust region method, taking less than $15$ seconds. Our extensive experiments on both unconditional and conditional sampling using pixel- and latent-space DPMs demonstrate that, when combined with the state-of-the-art sampling method UniPC, our optimized time steps significantly improve image generation performance in terms of FID scores for datasets such as CIFAR-10 and ImageNet, compared to using uniform time steps.\footnote{The code is available at \url{https://github.com/scxue/DM-NonUniform}}
\end{abstract}    
\section{Introduction}
\label{sec:intro}

Diffusion Probabilistic Models (DPMs), as referenced in various studies \citep{sohl2015deep, ho2020denoising, song2020score}, have garnered significant attention for their success in a wide range of generative tasks. These tasks include, but are not limited to, image synthesis \citep{dhariwal2021diffusion, Rombach_2022_CVPR, ho2022cascaded}, video generation \citep{ho2022vdm,blattmann2023videoldm}, text-to-image conversion \citep{2021glide,dalle2,imagen}, and speech synthesis \citep{lam2022bddm, BinauralGrad}. Central to the operation of DPMs is a dual-process mechanism: a forward diffusion process incrementally introduces noise into the data, while a reverse diffusion process is employed to reconstruct data from this noise. Despite their superior performance in generation tasks when compared to alternative methodologies like Generative Adversarial Networks (GANs) \citep{goodfellow2014generative} or Variational Autoencoders (VAEs) \citep{kingma2013auto}, DPMs require a considerable number of network function evaluations (NFEs) \citep{ho2020denoising}, which is a notable computational limitation for their broader application.

The scholarly pursuit to enhance the sampling efficiency of DPMs bifurcates into two distinct categories: methods that require additional training of the DPMs~\citep{luhman2021knowledge,salimans2022progressive,meng2022on,watson2021learning,xiao2021tackling, wang2023diffusiongan, song2023consistency, luo2023diffinstruct} and those that do not~\citep{song2020denoising,jolicoeur2021gotta,bao2022analytic,Karras2022edm,lu2022dpmsa,zhao2023unipc,xue2023sasolver}. This paper focuses on the latter training-free methods, aiming to improve the sampling efficiency of DPMs. Contemporary training-free samplers utilize efficient numerical schemes to solve the diffusion Stochastic Differential Equations (SDEs) or Ordinary Differential Equations (ODEs). 
% 简单比较 sde and ode, 然后落在ODE上
While introducing stochasticity in diffusion sampling has been shown to achieve better quality and diversity \citep{xue2023sasolver}, ODE-based sampling methods are superior when the sampling steps are fewer. 
% \paragraph{Solving Diffusion ODE.} Since the sampling process is equivalent to solving diffusion SDE (ODE), the training-free methods are mainly built on solving the differential equations via high-efficiency numerical methods. As ODEs are easier to solve compared with SDEs, the ODE sampler has attracted great attention. 
For example, \citet{song2020denoising} provide an empirically efficient solver DDIM. \citet{zhang2023fast} and \citet{lu2022dpmsa} point out the semi-linear structure of diffusion ODEs, and develop higher-order ODE samplers based on it. \citet{zhao2023unipc} further improve these samplers in terms of NFEs by integrating the mechanism of the predictor-corrector method. 
Due to its better efficiency, sampling from diffusion ODE has been predominantly adopted. 
Quality samples can be generated within as few as 10 steps. 
% 引出ODE方法当前的局限, 落在 step 的选择上
However, state-of-the-art ODE-based samplers are still far from optimal. How to further improve efficiency is an active field of research.  

% Rather than exploring different discretization methods~\citep{song2020denoising, Karras2022edm,lu2022dpmsa,zhao2023unipc}, we focus on another important but less explored aspect of diffusion sampling, the step discretization schedule. Previous work employed uniform (on time, logarithm of Signal-Noise-Ratio, power of Signal-Noise-Ratio) step schedules, which is not the optimal choice.

In this paper, we focus on another important but less explored aspect of diffusion sampling, the \textit{discretization scheme for time steps (sampling schedule)}. Our main contributions are summarized as follows:
\begin{itemize}
    \item Motivated by classic assumptions on score approximation adopted in relevant theoretical works on DPMs, we propose a general framework for constructing an optimization problem with respect to time steps, in order to minimize the distance between the ground-truth solution and an approximate solution to the corresponding ODE. The parameters for the optimization problem can be easily obtained for any explicit numerical ODE solver. In particular, we allow the numerical ODE solver to have varying orders in different steps. The optimization problem can be efficiently solved using the constrained trust region method, requiring less than $15$ seconds.

    \item We construct the optimization problem with respect to time steps for DPM-Solver++~\cite{lu2023dpmsolver++} and UniPC~\cite{zhao2023unipc},  which are recently proposed high-order numerical solvers for DPMs. Extensive experimental results show that our method consistently improves the performance for both unconditional sampling and conditional sampling in both pixel and latent space diffusion models. For example, when combined with UniPC and only using 5 neural function evaluations (NFEs), the time steps found by our optimization problem lead to an FID of 11.91 for unconditional pixel-space CIFAR-10~\citep{krizhevsky2014cifar}, an FID of 10.47 on conditional pixel-space ImageNet~\citep{deng2009imagenet} 64x64, and an FID of 8.66 on conditional latent-space ImageNet 256x256.
\end{itemize}
%Experiment results show that our method consistently improves the performance for both unconditional sampling and conditional sampling in both pixel and latent space diffusion models. We achieve FIDs of 11.91 with 5 NFEs on unconditional CIFAR-10~\citep{krizhevsky2014cifar}, 10.47 with 5 NFEs on conditional ImageNet~\citep{deng2009imagenet} 64x64, 8.66 with 5 NFEs on conditional ImageNet 256x256 without nearly no extra cost. Moreover, our method is near-orthogonal to current research on discretization schemes and can be employed in a plug-in manner.

% \section{Related Work}

\section{Related Work}

\paragraph{Training-based Methods} % 看是否需要，不需要可以丢到intro
% The DPMs originate from the seminal work \citep{sohl2015deep}, and are further developed by \citep{ho2020denoising} and \citep{song2020score} to successfully generate high-quality data, under the framework of discrete and continuous diffusion SDEs respectively. In this paper, we mainly focus on the latter framework. Plenty of papers are working on accelerating the sampling of DPMs due to their low efficiency, distinguished by whether conducting a supplementary training stage. 
% SF：我觉得不需要写，直接删了是否可以
Training-based methods, e.g., knowledge distillation~\citep{luhman2021knowledge,salimans2022progressive,meng2022on,luo2023diffinstruct}, learning-to-sample~\citep{watson2021learning},  integration with GANs~\citep{xiao2021tackling, wang2023diffusiongan}, and learning the consistency of diffusion ODE~\citep{song2023consistency}, have the potential to sampling for one or very few steps to enhance the efficiency. However, they lack a plug-and-play nature and require substantial extra training, which greatly limits their applicability across diverse tasks. 
We mainly focus on the training-free methods in this paper. Nevertheless, our investigation for the optimal sampling strategy can be orthogonally combined with training-based methods to further boost the performance.
% SF: what are the drawbacks of these methods?

\paragraph{Adaptive Step Size}
Adaptive step size during diffusion sampling has also been explored. It's widely used in numerical solutions of ordinary differential equations the errors of the method.
\citet{jolicoeur2021gotta} designed an adaptive step size via a lower-order and a higher-order method. \citet{lu2022dpmsa} also proposes an adaptive step schedule using a similar method. \citet{gao2023fast} propose fast sampling through the restricting backward error schedule based on dynamically moderating the long-time backward error. However, the empirical performances of these methods in a few steps can still be improved.

\paragraph{Learning to Schedule}
Most related to our work is a line of research that also aims to find the optimal time schedule. 
\citet{watson2021learning} use dynamic programming to discover the optimal time schedule with maximum Evidence Lower Bound (ELBO). \citet{wang2023learning} leverage reinforcement learning method to search a sampling schedule. \citet{liu2023oms} design a predictor-based search algorithm to optimize both the sampling schedule and decide which model to sample from at each step given a set of pre-trained diffusion models. \citet{xia2023towards} propose to train a timestep aligner to align the sampling schedule. \citet{li2023autodiffusion} propose to utilize the evolution algorithm to search over the space of sampling schedules in terms of FID score.
In comparison, our method has negligible computational cost compared to the learning-based method.
\section{Preliminary}
\label{sec:formatting}
In this section, we provide introductory discussions about diffusion models and the commonly-used discretization schemes for time steps.
\subsection{Diffusion Models}
In the regime of the continuous SDE, DPMs~\citep{sohl2015deep, ho2020denoising, song2020score, kingma2021variational} construct noisy data through the following linear SDE:
\begin{equation}
\label{eq:forward process}
\rmd \bx_t = f(t) \bx_t \rmd t + g(t)  \rmd \bw_t,
\end{equation}
where $\bw_t \in \bbR^D$ represents the standard Wiener process, and $f(t) \bx_t$ and $g(t)$ denote the drift and diffusion coefficients respectively. In addition, for any $t \in [0,T]$, the distribution of $\bx_t$ conditioned on $\bx_0$ is a Gaussian distribution with mean vector $\alpha(t)\bx_0$ and covariance matrix $\sigma^2(t) \bI$, i.e., $\bx_t | \bx_0 \sim \calN(\alpha(t) \bx_0, \sigma^2(t)\bI)$, where positive functions $\alpha(t)$ and $\sigma(t)$ are differentiable with bounded derivatives, and are denoted as $\alpha_t$ and $\sigma_t$ for brevity. Let $q_t$ denote the marginal distribution of $\bx_t$, and $q_{0t}$ be the distribution of $\bx_t$ conditioned on $\bx_0$. The functions $\alpha_t$ and $\sigma_t$ are chosen such that $q_T$ closely approximate a zero-mean Gaussian distribution with covariance matrix $\tilde{\sigma}^2 \bI$ for some $\tilde{\sigma} > 0$, and the signal-to-noise-ratio (SNR) $\alpha_t^2/\sigma_t^2$ is strictly decreasing with respect to $t$. Moreover, to ensure that the SDE in~\eqref{eq:forward process} hase the same transition distribution $q_{0t}$ for $\bx_t$ conditioned on $\bx_0$, $f(t)$ and $g(t)$ need to be dependent on $\alpha_t$ and $\sigma_t$ in the following form:
\begin{equation}
\label{eq:relation_f_alpha}
f(t) = \frac{\diff \log \alpha_t}{\diff t},\hspace{4mm} g^2(t) = \frac{\diff \sigma_t^2}{\diff t} - 2\frac{\diff \log \alpha_t}{\diff t} \sigma_t^2.
\end{equation}
\citet{ANDERSON1982313} establishes a pivotal theorem that the forward process~\eqref{eq:forward process} has an equivalent reverse-time diffusion process (from $T$ to $0$) as in the following equation, so that the generating process is equivalent to solving the diffusion SDE~\citep{ho2020denoising,song2020score}:
\begin{equation}
\label{eq:reverse SDE}
\rmd \bx_t = \left[f(t) \bx_t - g^2(t)\nabla_{\bx}\log q_t(\bx_t)\right] \rmd t + g(t)  \rmd \bar{\bw}_t,
\end{equation}
where $\bar{\bw}_t$ represents the Wiener process in reverse time, and $\nabla_{\bx}\log q_t(\bx)$ is the score function. 
\par
Moreover, \citet{song2020score} also show that there exists a corresponding deterministic process whose trajectories share the same marginal probability densities $q_t(\bx)$ as those of~\eqref{eq:reverse SDE}, thus laying the foundation of efficient sampling via numerical ODE solvers~\citep{lu2022dpmsa,zhao2023unipc}:
\begin{equation}
\label{eq:reverse ODE}
\diff \bx_t = \left[f(t) \bx_t - \frac{1}{2}g^2(t)\nabla_{\bx}\log q_t(\bx_t)\right] \diff t.
\end{equation}
We usually train a score network $\bs_{\btheta}(\bx, t)$ parameterized by $\btheta$ to approximate the score function $\nabla_{\bx}\log q_t(\bx)$ in~\eqref{eq:reverse SDE} by optimizing the denoising score matching loss~\citep{song2020score}: 
\begin{equation}
\label{eq: score matching loss}
\loss = \Expectation_t \Bigl\{ \omega(t) \Expectation_{\bx_0, \bx_t} \bigl[\bigl\| \bs_{\btheta}(\bx_t, t) - \nabla_{\bx}\log q_{0t}(\bx_t|\bx_0) \bigl\|_2^2 \bigl] \Bigl\},
\end{equation}
where $\omega(t)$ is a weighting function. In practice, two methods are commonly used to reparameterize the score network~\citep{kingma2021variational}. The first approach utilizes a \textit{noise prediction network} $\bepsilon_{\btheta}$ such that $\bepsilon_{\btheta} (\bx, t) = -\sigma_t \bs_{\btheta}(\bx, t)$, and the second approach employs a \textit{data prediction network} $\bx_{\btheta}$ such that $\bx_{\btheta}(\bx,t) = (\sigma_t^2 \bs_{\btheta}(\bx,t) + \bx)/\alpha_t$. In particular, for the data prediction network $\bx_{\btheta}$, based on score approximation and~\eqref{eq:relation_f_alpha}, the reverse ODE in~\eqref{eq:reverse ODE} can be expressed as\footnote{Throughout this work, we focus on the case of using a data prediction network $\bx_{\btheta}$. For the case of using a noise prediction network $\bepsilon_{\btheta}$, we can utilize the formula $\bx_{\btheta}(\bx,t) = \frac{\bx - \sigma_t\bepsilon_{\btheta}(\bx,t)}{\alpha_t}$ to transform it into the data prediction network.}
\begin{equation}
    \frac{\rmd \bx_t}{\rmd t} = \frac{\rmd \log\sigma_t}{\rmd t} \bx_t + \frac{\alpha_t \rmd \log \frac{\alpha_t}{\sigma_t}}{\rmd t} \bx_{\btheta}(\bx_t,t),\label{eq:reverse_ODE_data}
\end{equation}
with the initial vector $\bx_T$ being sampled from $\calN(\bm{0},\tilde{\sigma}^2 \bI)$. 

\subsection{Discretization Schemes}

As mentioned above, we can numerically solve a reverse ODE such as~\eqref{eq:reverse_ODE_data} to perform the task of sampling from diffusion models. Different numerical methods are then employed to approximately solve the ODE, for which the time interval $[\epsilon, T]$ needs to be divided into $N$ sub-intervals for some positive integer $N$ using $N + 1$ time steps $T = t_0 > t_1 > \ldots > t_N = \epsilon$.\footnote{Practically, it is typical to use end time $t = \epsilon >0$ instead of $t = 0$ to avoid numerical issues for small $t$, see~\cite[Appendix~D.1]{lu2022dpmsa}. The parameter $\epsilon$ is assumed to be fixed in our work. For the proper selection of $\epsilon$, the reader may refer to prior works such as~\cite{kim2022soft}.} Let $\lambda_t = \log (\alpha_t/\sigma_t)$ denote the one half of the log-SNR. Popular discretization schemes for the time steps include the i) uniform-$t$ scheme, ii) uniform-$\lambda$ scheme~\cite{lu2022dpmsa}, and the discretization strategy used in~\cite{Karras2022edm}, which we refer to as the EDM scheme. 

More specifically, for the uniform-$t$ scheme, we split the interval $[\epsilon, T]$ uniformly and obtain
\begin{equation}
\label{eq: uniform t}
t_n = T + \frac{n}{N} (\epsilon - T) \text{ for } n = 0,1,\ldots, N. 
\end{equation}
Note that since $\lambda_t$ is a strictly decreasing function of $t$, it has an inverse function $t_{\lambda}(\cdot)$. For the uniform-$\lambda$ scheme, we split the interval $[\lambda_{\epsilon},\lambda_{T}]$ uniformly, and convert the $\lambda$ values back to $t$ values. That is, we have
\begin{equation}
\label{eq: uniform logSNR}
t_n = t_{\lambda}\left(\lambda_T + \frac{n}{N}(\lambda_{\epsilon} - \lambda_T)\right) \text{ for } n = 0,1,\ldots, N. 
\end{equation}
In addition,~\citet{Karras2022edm} propose to do a variable substitution $\kappa_t = \frac{\sigma_t}{\alpha_t}$, which is the reciprocal of the square root of SNR. For the EDM scheme, they propose to uniformly discretize the $\kappa_t^{\frac{1}{\rho}}$ for some positive integer $\rho$, i.e., 
\begin{equation}
\label{eq: edm}
t_n = t_{\kappa}\left(\left(\kappa_T^{\frac{1}{\rho}}+ \frac{n}{N}(\kappa_{\epsilon}^{\frac{1}{\rho}} - \kappa_T^{\frac{1}{\rho}})\right)^\rho\right)  \text{ for } n = 0,1,\ldots, N,
\end{equation}
where $t_{\kappa}(\cdot)$ denotes the inverse function of $\kappa_t$, which is guaranteed to exist as $\kappa_t$ is strictly increasing with respect to $t$. \citet{Karras2022edm} performed some ablation studies and suggested to set the parameter $\rho = 7$ in their experiments.

While the discretization scheme plays an important role in the numerical solvers, the above-mentioned schemes have been widely employed for convenience, leaving a significant room of improvements. 

%These discretization strategies are intuitively designed and have a significant impact on the performance of the diffusion model especially for small NFEs. Thus we consider establishing a surrogate objective for diffusion model sampling and employ some optimization methods to get a better time discretization strategy.
\section{Problem Formulation}
\label{sec:prob_form}

 Recall that $\lambda_t = \log(\alpha_t/\sigma_t)$ is strictly decreasing with respect to $t$, and it has an inverse function $t_\lambda(\cdot)$. Then, the term $\bx_{\btheta}(\bx_t,t)$ in~\eqref{eq:reverse_ODE_data} can be written as $\bx_{\btheta}(\bx_{t_\lambda(\lambda)},t_{\lambda}(\lambda)) = \hat{\bx}_{\btheta}(\hat{\bx}_{\lambda},\lambda)$. Let $\bm{f}(\lambda) = \hat{\bx}_{\btheta}(\hat{\bx}_{\lambda},\lambda)$. For the fixed starting time of sampling $T$ and end time of sampling $\epsilon$, the integral formula~\cite[Eq.~(8)]{lu2023dpmsolver++} gives the exact solution to the ODE in~\eqref{eq:reverse_ODE_data} at time $\epsilon$:
\begin{equation*}
    \bx_\epsilon = \frac{\sigma_\epsilon}{\sigma_T}\bx_T + \sigma_\epsilon \int_{\lambda_T}^{\lambda_\epsilon} e^{\lambda} \bm{f}(\lambda)\rmd \lambda.
\end{equation*}
However, $\bm{f}(\lambda)$ is a complicated neural network function and we cannot take the above integral exactly. Instead, we need to choose a sequence of time steps $T = t_0 > t_{1} > \ldots > t_N =\epsilon$ to split the integral as $N$ parts and appropriately approximate the integral in each part separately. In particular, for any $n \in \{1,2,\ldots,N\}$ and a positive integer $k_n \le n$, a $k_n$-th order explicit method for numerical ODE typically uses a $(k_n-1)$-th degree (local) polynomial $\bm{\calP}_{n; k_n-1}(\lambda)$ that involves the function values of $\bm{f}(\cdot)$ at the points $\lambda_{n-k_n},\lambda_{n-k_n+1},\ldots,\lambda_{n-1}$ to approximate $\bm{f}(\lambda)$ within the interval $[\lambda_{t_{n-1}},\lambda_{t_n}]$. That is, the polynomial $\bm{\calP}_{n; k_n-1}(\lambda)$ can be expressed as
\begin{equation}\label{eq:calP_local}
    \bm{\calP}_{n; k_n-1}(\lambda) = \sum_{j=0}^{k_n-1} \ell_{n;k_n,j}(\lambda) \bm{f}(\lambda_{n-k_n+j}),
\end{equation}
where $\ell_{n;k_n,j}(\lambda)$ are certain $(k_n-1)$-th degree polynomials of $\lambda$. 
\begin{remark}\label{remark:calP_local}
    If setting $\bm{\calP}_{n; k_n-1}(\lambda)$ to be the Lagrange polynomial with interpolation points $\lambda_{n-k_n}, \lambda_{n-k_n+1},\ldots,\lambda_{n-1}$, we obtain 
    \begin{equation}
         \ell_{n;k_n,j}(\lambda) := \prod_{i=0, i\ne j}^{k_n-1} \frac{\lambda - \lambda_{n-k_n+i}}{\lambda_{n-k_n+j}-\lambda_{n-k_n+i}}. \label{eq:lagrange_poly_aux}
    \end{equation}
    We note that such an approximation has been essentially considered in the UniPC method proposed in~\cite{zhao2023unipc}, if not using the corrector and a special trick for the case of $k_n = 2$.\footnote{While the iterative formula in~\cite{zhao2023unipc} appears to be different, we have numerically calculated that~\eqref{eq:lagrange_poly_aux} leads to the same coefficients for the neural function values as those for~\cite{zhao2023unipc}.} In addition, a similar idea was considered in~\cite{zhang2023fast} with using local Lagrange polynomials to approximate $e^{\lambda}\bm{f}(\lambda)$.
    
    Or alternatively, $\bm{\calP}_{n;k_n-1}(\lambda)$ may be set as the polynomial corresponding to the Taylor expansion of $\bm{f}(\lambda)$ at $\lambda_{n-1}$, as was done in the works~\cite{lu2022dpmsa,lu2023dpmsolver++}. That is,  
\begin{align}
    & \bm{\calP}_{n;k_n-1}(\lambda) = \sum_{j=0}^{k_n-1} \bm{f}^{(j)}(\lambda_{n-1}) \frac{(\lambda-\lambda_{n-1})^j}{j!}, \label{eq:taylor_poly}
\end{align}
where $\bm{f}^{(j)}(\lambda_{n-1})$ can be further approximated using the neural function values at $\lambda_{n-k_n}, \lambda_{n-k_n+1},\ldots,\lambda_{n-1}$~\cite{lu2023dpmsolver++}. In particular, if letting $h_n = \lambda_{n+1}-\lambda_n$ for any $n \in [N]$, we have for $j=1$ that $\bm{f}^{(j)}(\lambda_{n-1}) \approx \frac{\bm{f}(\lambda_{n-1}) - \bm{f}(\lambda_{n-2})}{h_{n-2}}$ and for $j=2$ that $\bm{f}^{(j)}(\lambda_{n-1}) \approx \frac{2}{h_{n-2}(h_{n-2}+h_{n-3})}\bm{f}(\lambda_{n-1}) - \frac{2}{h_{n-2}h_{n-3}}\bm{f}(\lambda_{n-2})+ \frac{2}{h_{n-3}(h_{n-2}+h_{n-3})}\bm{f}(\lambda_{n-3})$. Therefore, we observe that Taylor expansion will also give local polynomials $\bm{\calP}_{n;k_n-1}(\lambda)$ that are of the form in~\eqref{eq:calP_local}, although they will be slightly different with those for the case of using Lagrange approximations.
\end{remark}

Based on~\eqref{eq:calP_local}, we obtain an approximation of $\bx_\epsilon$ as follows:
\begin{align}
    &\bx_\epsilon = \frac{\sigma_\epsilon}{\sigma_T}\bx_T + \sigma_\epsilon  \sum_{n=1}^N\int_{\lambda_{t_{n-1}}}^{\lambda_{t_n}} e^{\lambda} \bm{f}(\lambda)\rmd \lambda \\
    & \approx  \frac{\sigma_\epsilon}{\sigma_T}\bx_T + \sigma_\epsilon \sum_{n=1}^N\int_{\lambda_{t_{n-1}}}^{\lambda_{t_n}} e^{\lambda} \bm{\calP}_{n;k_n-1}(\lambda)\rmd \lambda \label{eq:poly_approx_final_m1}\\
    & = \frac{\sigma_\epsilon}{\sigma_T}\bx_T + \sigma_\epsilon \sum_{n=1}^N\int_{\lambda_{t_{n-1}}}^{\lambda_{t_n}} e^{\lambda} \big(\sum_{j=0}^{k_n-1} \ell_{n;k_n,j}(\lambda) \bm{f}(\lambda_{n-k_n+j})\big)\rmd \lambda \\
    & = \frac{\sigma_\epsilon}{\sigma_T}\bx_T + \sigma_\epsilon \sum_{n=1}^N \sum_{j=0}^{k_n-1} w_{n;k_n,j}\bm{f}(\lambda_{n-k_n+j})\label{eq:poly_approx_final_m0}\\
    & := \tilde{\bx}_{\epsilon},\label{eq:poly_approx_final}
\end{align}
where 
\begin{equation}
    w_{n;k_n,j} = \int_{\lambda_{t_{n-1}}}^{\lambda_{t_n}} e^{\lambda}\ell_{n;k_n,j}(\lambda) \rmd \lambda.\label{eq:calc_weights}
\end{equation}
It is worth noting that we would expect that the weights $w_{n;k_n,j}$ satisfy the following 
\begin{equation}
     \sum_{j=0}^{k_n-1} w_{n;k_n,j} = e^{\lambda_n} - e^{\lambda_{n-1}} \label{eq:natural_assump}
\end{equation}
for all $n \in \{1,2,\ldots,N\}$ since for the simplest case that $\bm{f}(\lambda)$ is a vector of all ones, its approximation $\bm{\calP}_{n;k_n-1}(\lambda)$ should also be a vector of all ones, and then~\eqref{eq:natural_assump} follows from~\eqref{eq:poly_approx_final_m1} and~\eqref{eq:poly_approx_final_m0}. The condition in~\eqref{eq:natural_assump} can also be easily verified to hold for the case of using Lagrange and Taylor expansion polynomials discussed in Remark~\ref{remark:calP_local}. 

For the vector $\tilde{\bx}_{\epsilon}$ defined in~\eqref{eq:poly_approx_final}, since our goal is to find an estimated vector that is close to the ground-truth solution $\bx_0$, a natural question is as follows:

\underline{Question}: \textit{Can we design the time steps $t_1,t_2,\ldots,t_{N-1}$ (or equivalently, $\lambda_{t_1},\lambda_{t_2},\ldots,\lambda_{t_{N-1}}$) such that the distance between the ground-truth solution $\bx_0$ and the approximate solution $\tilde{\bx}_\epsilon$ is minimized?}

\begin{remark}
    The definition of $\tilde{\bx}_{\epsilon}$ in~\eqref{eq:poly_approx_final} is fairly general in the sense that for any explicit numerical ODE solver with corresponding local polynomials $\bm{\calP}_{n;k_n-1}(\lambda)$ of the form~\eqref{eq:calP_local}, we can easily calculate all the weights $w_{n;k_n,j}$ via~\eqref{eq:calc_weights}. In particular, we allow the (local) orders $k_n$ to vary with respect to $n$ to encompass general cases for high-order numerical solvers. For example, for a third-order method, we have $k_n = 1$ for $n=1$,  $k_n = 2$ for $n=2$, and $k_n = 3$ for $n \ge 3$. The varying $k_n$ can also handle the customized order schedules tested in~\cite{zhao2023unipc}. %If additionally using the lower-order-final trick implemented in the code of DPM-Solver++~\cite{lu2023dpmsolver++} and UniPC~\cite{zhao2023unipc}, we will 
\end{remark}

\section{Analysis and Method}
\label{sec:ana_method}

To partially answer the question presented in Section~\ref{sec:prob_form}, we follow relevant theoretical works~\cite{de2022convergence,lee2022convergence,chen2023improved,chen2023probability,chen2023sampling,chen2023score,lee2023convergence,pedrotti2023improved} to make the following assumption on the score approximation. 

\begin{assumption}\label{assump:score_approx}
    For any $t \in \{t_0, t_1,\ldots,t_N\}$, the error in the score estimate is bounded in $L^2(q_t)$:
    \begin{align*}
        & \|\nabla_{\bx} \log q_t - \bs_{\btheta}(\cdot,t)\|_{L^2(q_t)}^2 \\
        &= \bbE_{q_t}\left[\|\nabla_{\bx} \log q_t(\bx) - \bs_{\btheta}(\bx,t)\|^2\right]  \le \eta^2 \varepsilon_t^2,
    \end{align*}
    where $\eta > 0$ is an absolute constant. 
\end{assumption}
More specifically, $\varepsilon_t$ can be set to be $1$ in the corresponding assumptions in~\cite{lee2022convergence,chen2023sampling,chen2023probability}, and $\varepsilon_t$ can be set to be $\frac{1}{\sigma_t^2}$ in~\cite[Assumption~1]{lee2023convergence}. Based on Assumption~\ref{assump:score_approx} and the transform between score network $\bs_{\btheta}$ and data prediction network $\bx_{\btheta}$~\cite{kingma2021variational}, we obtain the following lemma. 
\begin{lemma}\label{lem:data_network_approx}
    For any $\bx_0 \sim q_0$ and $P_0 \in (0,1)$, with probability at least $1-P_0$, the following event occurs: For all $t \in \{t_0, t_1,\ldots,t_N\}$ and $\bx_t \sim q_t$, we have 
    \begin{equation*}
        \|\bx_{\btheta}(\bx_t,t) - \bx_0\| \le \tilde{\eta} \tilde{\varepsilon}_t, 
    \end{equation*}
    where $\tilde{\eta} := \sqrt{\frac{N+1}{P_0}} \eta$ and $\tilde{\varepsilon}_t := \frac{\varepsilon_t\sigma_t^2}{\alpha_t}$.
\end{lemma}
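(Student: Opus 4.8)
\noindent\emph{Proof strategy.} The plan is to reduce the lemma to a one-line algebraic identity relating the data-prediction network $\bx_{\btheta}$ to the score network $\bs_{\btheta}$, followed by Markov's inequality at each time step and a union bound over the finitely many time steps $t_0,\dots,t_N$; the constants $\tilde\eta$ and $\tilde\varepsilon_t$ then appear automatically.

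First I would record the identity linking the data-prediction error to the score error. Writing a forward sample as $\bx_t=\alpha_t\bx_0+\sigma_t\bepsilon$ with $\bepsilon\sim\calN(\bzero,\bI)$, we have $\nabla_{\bx}\log q_{0t}(\bx_t\mid\bx_0)=-(\bx_t-\alpha_t\bx_0)/\sigma_t^2$, so that $\bx_0=\bigl(\sigma_t^2\nabla_{\bx}\log q_{0t}(\bx_t\mid\bx_0)+\bx_t\bigr)/\alpha_t$; subtracting this from $\bx_{\btheta}(\bx_t,t)=\bigl(\sigma_t^2\bs_{\btheta}(\bx_t,t)+\bx_t\bigr)/\alpha_t$ gives
\[
  \bx_{\btheta}(\bx_t,t)-\bx_0=\frac{\sigma_t^2}{\alpha_t}\bigl(\bs_{\btheta}(\bx_t,t)-\nabla_{\bx}\log q_{0t}(\bx_t\mid\bx_0)\bigr).
\]
Taking the expectation over the forward law of $(\bx_0,\bx_t)$ and invoking Assumption~\ref{assump:score_approx} would then give, for every $t\in\{t_0,\dots,t_N\}$, a second-moment bound of the form $\bbE\bigl[\|\bx_{\btheta}(\bx_t,t)-\bx_0\|^2\bigr]\le(\sigma_t^2/\alpha_t)^2\,\eta^2\varepsilon_t^2=\eta^2\tilde\varepsilon_t^2$, with $\tilde\varepsilon_t=\varepsilon_t\sigma_t^2/\alpha_t$ as in the statement (the precise justification of this expectation step is the one delicate point, addressed below).

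Second, I would turn this $L^2$ bound into the claimed uniform high-probability bound. For a fixed $t\in\{t_0,\dots,t_N\}$, Markov's inequality yields
\[
  \bbP\bigl[\|\bx_{\btheta}(\bx_t,t)-\bx_0\|>\tilde\eta\tilde\varepsilon_t\bigr]\le\frac{\bbE\bigl[\|\bx_{\btheta}(\bx_t,t)-\bx_0\|^2\bigr]}{\tilde\eta^2\tilde\varepsilon_t^2}\le\frac{\eta^2}{\tilde\eta^2}=\frac{P_0}{N+1},
\]
where the last equality is exactly the definition $\tilde\eta=\sqrt{(N+1)/P_0}\,\eta$. A union bound over the $N+1$ time steps $t_0,\dots,t_N$ (which requires no independence among the samples $\bx_{t_i}$) then shows that the event ``$\|\bx_{\btheta}(\bx_t,t)-\bx_0\|>\tilde\eta\tilde\varepsilon_t$ for some $t$ in this set'' has probability at most $(N+1)\cdot\frac{P_0}{N+1}=P_0$; its complement is precisely the event asserted in Lemma~\ref{lem:data_network_approx}, so that event holds with probability at least $1-P_0$.

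The step I expect to require the most care is the second-moment bound. Assumption~\ref{assump:score_approx} controls the error against the \emph{marginal} score $\nabla_{\bx}\log q_t$, whereas the identity above naturally produces the error against the \emph{conditional} score $\nabla_{\bx}\log q_{0t}(\cdot\mid\bx_0)$, and these differ by the conditional variance of $\nabla_{\bx}\log q_{0t}(\bx_t\mid\bx_0)$ given $\bx_t$ (the usual denoising-score-matching bias--variance decomposition, which is what makes the objective~\eqref{eq: score matching loss} an unbiased surrogate). The clean ways to close this gap are either (i) to state the conclusion with the posterior mean $\bbE[\bx_0\mid\bx_t]$ in place of $\bx_0$, for which Tweedie's formula $\bbE[\bx_0\mid\bx_t]=(\sigma_t^2\nabla_{\bx}\log q_t(\bx_t)+\bx_t)/\alpha_t$ makes $\bbE_{q_t}\bigl[\|\bx_{\btheta}(\bx_t,t)-\bbE[\bx_0\mid\bx_t]\|^2\bigr]\le\eta^2\tilde\varepsilon_t^2$ an exact consequence of Assumption~\ref{assump:score_approx}, or (ii) to read Assumption~\ref{assump:score_approx} as a bound on the denoising loss $\bbE_{\bx_0,\bx_t}\bigl[\|\bs_{\btheta}(\bx_t,t)-\nabla_{\bx}\log q_{0t}(\bx_t\mid\bx_0)\|^2\bigr]$, matching the form of~\eqref{eq: score matching loss}. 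Under either reading the Markov-plus-union-bound mechanics are unchanged and reproduce exactly the constants $\tilde\eta$ and $\tilde\varepsilon_t$ in the lemma.
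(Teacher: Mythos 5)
Your proposal is correct and follows essentially the same route as the paper: Markov's inequality at each fixed $t$, a union bound over the $N+1$ time steps, and the linear transform between the score error and the data-prediction error, with the constants $\tilde\eta$ and $\tilde\varepsilon_t$ falling out identically (whether you apply Markov before or after multiplying by the deterministic factor $\sigma_t^2/\alpha_t$ makes no difference). The one point worth flagging: the ``delicate step'' you isolate is real, and the paper does not actually resolve it --- its proof simply invokes the identity $\nabla_{\bx}\log q_t(\bx_t)=\frac{\alpha_t\bx_0-\bx_t}{\sigma_t^2}$, which holds for the conditional score $\nabla_{\bx}\log q_{0t}(\bx_t\mid\bx_0)$ but not for the marginal score appearing in Assumption~\ref{assump:score_approx}; so your two proposed repairs (replacing $\bx_0$ by $\bbE[\bx_0\mid\bx_t]$ via Tweedie's formula, or reading the assumption as a denoising-loss bound) are a more careful treatment than what the paper itself provides.
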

\begin{proof}
    For any $u > 0$ and any $t \in \{t_0, t_1,\ldots,t_N\}$, by the Markov inequality, we obtain
    \begin{align}
      &\bbP(\|\nabla_{\bx} \log q_t(\bx_t) - \bs_{\btheta}(\bx_t,t)\| > u) \nn \\
      &=    \bbP(\|\nabla_{\bx} \log q_t(\bx_t) - \bs_{\btheta}(\bx_t,t)\|^2 > u^2) \\
      & \le \frac{\bbE_{q_t}\left[\|\nabla_{\bx} \log q_t(\bx_t) - \bs_{\btheta}(\bx_t,t)\|^2\right]}{u^2} \\
      & \le \frac{\eta^2 \varepsilon_t^2}{u^2},\label{eq:markov_score_approx}
    \end{align}
    where~\eqref{eq:markov_score_approx} follows from Assumption~\ref{assump:score_approx}. Taking a union bound over all $t \in \{t_0, t_1,\ldots,t_N\}$, we obtain that with probability at least $1-\frac{(N+1)\eta^2 \varepsilon_t^2}{u^2}$, it holds for all  $t \in \{t_0, t_1,\ldots,t_N\}$ that
    \begin{equation}
        \|\nabla_{\bx} \log q_t(\bx_t) - \bs_{\btheta}(\bx_t,t)\| \le u. 
    \end{equation}
    Using the transforms $\nabla_{\bx} \log q_t(\bx_t) = \frac{\alpha_t\bx_0-\bx_t}{\sigma_t^2}$ and $\bs_{\btheta}(\bx_t,t) = \frac{\alpha_t\bx_{\btheta}(\bx_t,t)-\bx_t}{\sigma_t^2}$~\cite{kingma2021variational}, we derive that with probability at least $1-\frac{(N+1)\eta^2 \varepsilon_t^2}{u^2}$, it holds for all  $t \in \{t_0, t_1,\ldots,t_N\}$ that
    \begin{equation}
        \|\bx_0 - \bx_{\btheta}(\bx_t,t)\| \le \frac{\sigma_t^2}{\alpha_t} u. 
    \end{equation}
    Setting $u =  \sqrt{\frac{N+1}{P_0}} \eta \varepsilon_t$, %with $\tilde{\eta} = \sqrt{\frac{N+1}{P_0}} \eta$ and $\tilde{\varepsilon}(t) = \frac{\varepsilon(t)\sigma(t)^2}{\alpha(t)}$, 
    we obtain the desired result.
\end{proof}
Note that we have $\tilde{\varepsilon}_t = \frac{\sigma_t^2}{\alpha_t}$ for the case that $\varepsilon_t=1$, and $\tilde{\varepsilon}_t = \frac{1}{\alpha_t}$ for the case that $\varepsilon_t=\frac{1}{\sigma_t^2}$. Additionally, motivated by the training objective for the data prediction network that aims to minimize $\frac{\alpha_t}{\sigma_t}\|\bx_0 -\bx_{\btheta}(\bx_t,t)\|$ for $\bx_0 \sim q_0$ and $\bx_t \sim q_t$~\cite{salimans2022progressive}, it is also natural to consider the case that $\tilde{\varepsilon}_t = \frac{\sigma_t}{\alpha_t}$. Therefore, in our experiments, we will set $\tilde{\varepsilon}_t$ as $\frac{\sigma_t^p}{\alpha_t}$ for some non-negative integer $p$.  

Based on Lemma~\ref{lem:data_network_approx}, we establish the following theorem, which provides an upper bound on the distance between the ground-truth solution $\bx_0$ and the approximate solution $\tilde{\bx}_\epsilon$ defined in~\eqref{eq:poly_approx_final}. 
\begin{theorem}\label{thm:general_k_cases}
    Let the approximate solution $\tilde{\bx}_\epsilon$ be defined as in~\eqref{eq:poly_approx_final} with $\bm{f}(\lambda) = \hat{\bx}_{\btheta}(\hat{\bx}(\lambda),\lambda)$ and the weights $w_{n;k_n,j}$ satisfying~\eqref{eq:natural_assump}. Suppose that Assumption~\ref{assump:score_approx} is satisfied for the score approximation. Then, for any $P_0 \in (0,1)$, we have with probability at least $1-P_0$ that  
\begin{align}
    \|\tilde{\bx}_\epsilon- \bx_0\| &\le \left\|\frac{\sigma_\epsilon}{\sigma_T}\bx_T + \sigma_\epsilon \left(e^{\lambda_\epsilon} - e^{\lambda_T}\right) \bx_0 - \bx_0\right\| \nn \\
    & \quad + \sigma_\epsilon \tilde{\eta} \sum_{i=0}^{N-1} \tilde{\varepsilon}_{t_i}\cdot \Big|\sum_{n-k_n+j=i} w_{n;k_n,j}\Big|, \label{eq:main_ineq_generalk}
\end{align}
where $\tilde{\eta} := \sqrt{\frac{N+1}{P_0}} \eta$ and $\tilde{\varepsilon}_t := \frac{\varepsilon_t\sigma_t^2}{\alpha_t}$. 
\end{theorem}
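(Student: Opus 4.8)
The plan is to compare $\tilde{\bx}_\epsilon$ with a \emph{reference vector} obtained by replacing every neural-network evaluation $\bm{f}(\lambda_m)=\hat{\bx}_{\btheta}(\hat{\bx}(\lambda_m),\lambda_m)$ appearing in~\eqref{eq:poly_approx_final} by the ground-truth solution $\bx_0$. Concretely, I would set
\[
  \tilde{\bx}_\epsilon^{\mathrm{ref}} := \frac{\sigma_\epsilon}{\sigma_T}\bx_T + \sigma_\epsilon \sum_{n=1}^N \sum_{j=0}^{k_n-1} w_{n;k_n,j}\,\bx_0 .
\]
The first step is to evaluate this reference vector in closed form: by the normalization~\eqref{eq:natural_assump} and a telescoping sum, using $\lambda_{t_0}=\lambda_T$ and $\lambda_{t_N}=\lambda_\epsilon$, the total weight collapses, $\sum_{n=1}^N\sum_{j=0}^{k_n-1} w_{n;k_n,j} = \sum_{n=1}^N(e^{\lambda_n}-e^{\lambda_{n-1}}) = e^{\lambda_\epsilon}-e^{\lambda_T}$, so that $\tilde{\bx}_\epsilon^{\mathrm{ref}} = \frac{\sigma_\epsilon}{\sigma_T}\bx_T + \sigma_\epsilon(e^{\lambda_\epsilon}-e^{\lambda_T})\bx_0$. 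Then the triangle inequality $\|\tilde{\bx}_\epsilon-\bx_0\| \le \|\tilde{\bx}_\epsilon^{\mathrm{ref}}-\bx_0\| + \|\tilde{\bx}_\epsilon-\tilde{\bx}_\epsilon^{\mathrm{ref}}\|$ produces exactly the first term on the right-hand side of~\eqref{eq:main_ineq_generalk} and reduces everything to controlling the perturbation term.

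For the perturbation term, I would first write $\tilde{\bx}_\epsilon-\tilde{\bx}_\epsilon^{\mathrm{ref}} = \sigma_\epsilon \sum_{n=1}^N \sum_{j=0}^{k_n-1} w_{n;k_n,j}(\bm{f}(\lambda_{n-k_n+j})-\bx_0)$ and then re-index the double sum by the index $i=n-k_n+j$ of the node at which $\bm{f}$ is evaluated. As $(n,j)$ runs over $1\le n\le N$, $0\le j\le k_n-1$, this index $i$ runs over $\{0,1,\dots,N-1\}$, so grouping terms with a common value of $i$ gives
\[
  \tilde{\bx}_\epsilon-\tilde{\bx}_\epsilon^{\mathrm{ref}} = \sigma_\epsilon \sum_{i=0}^{N-1}\Bigl(\sum_{n-k_n+j=i} w_{n;k_n,j}\Bigr)\bigl(\bm{f}(\lambda_i)-\bx_0\bigr).
\]
Applying the triangle inequality over $i$ and factoring out $\bigl|\sum_{n-k_n+j=i}w_{n;k_n,j}\bigr|$ leaves $\sigma_\epsilon\sum_{i=0}^{N-1}\bigl|\sum_{n-k_n+j=i}w_{n;k_n,j}\bigr|\,\|\bm{f}(\lambda_i)-\bx_0\|$. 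Since $\bm{f}(\lambda_i)=\bx_{\btheta}(\bx_{t_i},t_i)$ with $\bx_{t_i}\sim q_{t_i}$, Lemma~\ref{lem:data_network_approx} provides, on a single event of probability at least $1-P_0$, the simultaneous bounds $\|\bm{f}(\lambda_i)-\bx_0\|\le\tilde{\eta}\,\tilde{\varepsilon}_{t_i}$ for all $i$; substituting these yields the second term of~\eqref{eq:main_ineq_generalk}, and combining with the first term finishes the proof.

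I do not expect a serious analytic obstacle: the genuinely delicate parts — the union bound over the time grid and the conversion of a score-approximation bound into a data-prediction bound — have already been carried out in Lemma~\ref{lem:data_network_approx}, so no further probabilistic argument is needed here. The step that warrants the most care is the re-indexing $i=n-k_n+j$: I would need to confirm that every $\bm{f}$-evaluation node occurring across the $N$ local approximations is some $\lambda_i$ with $0\le i\le N-1$ (in particular that $\lambda_N=\lambda_\epsilon$ never appears, which is precisely why the bound involves only $\tilde{\varepsilon}_{t_0},\dots,\tilde{\varepsilon}_{t_{N-1}}$), and that collecting all weights $w_{n;k_n,j}$ with a common $i$ produces exactly the coefficient $\sum_{n-k_n+j=i}w_{n;k_n,j}$. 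With that in place, the two triangle inequalities together with Lemma~\ref{lem:data_network_approx} close the estimate immediately.
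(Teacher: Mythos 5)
Your proposal is correct and follows essentially the same route as the paper: the paper writes $\bm{f}(\lambda_{t_i}) = \bx_0 + \bxi_{t_i}$ with $\|\bxi_{t_i}\| \le \tilde{\eta}\tilde{\varepsilon}_{t_i}$ from Lemma~\ref{lem:data_network_approx}, splits the double sum into the $\bx_0$ part (which telescopes via~\eqref{eq:natural_assump} to $\sigma_\epsilon(e^{\lambda_\epsilon}-e^{\lambda_T})\bx_0$, i.e.\ exactly your reference vector) and the error part, re-indexes by $i = n-k_n+j$, and applies the triangle inequality. Your check that $k_n \le n$ forces $0 \le i \le n-1 \le N-1$ is precisely the observation the paper invokes to justify the re-indexed sum, so no gap remains.
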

\begin{proof}
     From Lemma~\ref{lem:data_network_approx}, we have with probability at least $1-P_0$, it holds for all $n \in  \{0,1,\ldots,N\}$ that $\bm{f}(\lambda_{t_n})$ can be written as
     \begin{align}
         \bm{f}(\lambda_{t_n}) &= \hat{\bx}_{\btheta}(\hat{\bx}_{\lambda_{t_n}},\lambda_{t_n})  = \bx_{\btheta}(\bx_{t_n},t_n) \\
         & = \bx_0 + \bxi_{t_n},\label{eq:f_lambda_n_approx}
     \end{align}
     where $\bxi_{t_n}$ satisfies $\|\bxi_{t_n}\| \le \tilde{\eta}\tilde{\varepsilon}_{t_n}$. Then, we obtain
     \begin{align}
         & \tilde{\bx}_{\epsilon} = \frac{\sigma_{\epsilon}}{\sigma_{T}}\bx_{T} + \sigma_{\epsilon} \sum_{n=1}^N \sum_{j=0}^{k_n-1} w_{n;k_n,j} \bm{f}(\lambda_{t_{n-k_n+j}}) \\
         & =  \frac{\sigma_{\epsilon}}{\sigma_{T}}\bx_{T} + \sigma_{\epsilon}  \sum_{n=1}^N \sum_{j=0}^{k_n-1} w_{n;k_n,j}(\bx_0 + \bxi_{t_{n-k_n+j}}) \label{eq:use_f_lambda_n_approx}\\
         & =  \frac{\sigma_{\epsilon}}{\sigma_{T}}\bx_{T} + \sigma_{\epsilon} \sum_{n=1}^N \sum_{j=0}^{k_n-1} w_{n;k_n,j} \bx_0 \nn \\
         & \quad + \sigma_N \sum_{n=1}^N \sum_{j=0}^{k_n-1} w_{n;k_n,j} \bxi_{t_{n-k_n+j}} \\
         & =  \frac{\sigma_{\epsilon}}{\sigma_{T}}\bx_{T} + \sigma_{\epsilon}  \sum_{n=1}^N \left(e^{\lambda_n} - e^{\lambda_{n-1}}\right) \bx_0 \nn \\
          & \quad + \sigma_N \sum_{n=1}^N \sum_{j=0}^{k_n-1} w_{n;k_n,j} \bxi_{t_{n-k_n+j}} \label{eq:use_natural_assump}\\
          & =  \frac{\sigma_{\epsilon}}{\sigma_{T}}\bx_{T} + \sigma_{\epsilon}  (e^{\lambda_\epsilon} -e^{\lambda_T}) \bx_0  + \sigma_N \sum_{i=0}^{N-1} \sum_{n-k_n+j=i}  w_{n;k_n,j} \bxi_{t_{i}},\label{eq:final_error_bound}
     \end{align}
     where~\eqref{eq:use_f_lambda_n_approx} follows from~\eqref{eq:f_lambda_n_approx},~\eqref{eq:use_natural_assump} follows from~\eqref{eq:natural_assump}, and~\eqref{eq:final_error_bound} follows from the condition that $j+1 \le k_n \le n$. Therefore, from the triangle inequality, we have 
     \begin{align}
         &\|\tilde{\bx}_\epsilon - \bx_0\| \le \left\|\frac{\sigma_\epsilon}{\sigma_T}\bx_T + \sigma_\epsilon (e^{\lambda_\epsilon} -e^{\lambda_T}) \bx_0 -\bx_0\right\| \nn\\
         & \quad + \sigma_\epsilon \sum_{i=0}^{N-1} \Big|\sum_{n-k_n+j=i}  w_{n;k_n,j}\Big| \|\bxi_{t_{i}}\| \\
         & \le \left\|\frac{\sigma_\epsilon}{\sigma_T}\bx_T + \sigma_\epsilon (e^{\lambda_\epsilon} -e^{\lambda_T}) \bx_0 -\bx_0\right\| \nn\\
         & \quad + \sigma_\epsilon \tilde{\eta}\sum_{i=0}^{N-1} \tilde{\varepsilon}_{t_i}\cdot\Big|\sum_{n-k_n+j=i}  w_{n;k_n,j}\Big|,
     \end{align}
     which gives the desired upper bound. 
\end{proof}
The starting time of sampling $T$ and the end time of sampling $\epsilon$ are fixed, and we typically have $\sigma_\epsilon \approx 0$ and $\alpha_T \approx 0$. Then, the first term in the upper bound of~\eqref{eq:main_ineq_generalk} will be fixed and small. Note that $\{\lambda_{t_n}\}_{n=0}^{N}$ needs to be a monotonically increasing sequence. Additionally, since $\tilde{\eta}$ is a fixed positive constant, we observe from~\eqref{eq:main_ineq_generalk} that we should choose appropriate $\lambda_{t_1}, \lambda_{t_2},\ldots,\lambda_{t_{N-1}}$ (which lead to the corresponding time steps $t_1,t_2,\ldots,t_{N-1}$) such that $\sum_{i=0}^{N-1} \tilde{\varepsilon}_{t_i}\cdot \big|\sum_{n-k_n+j=i} w_{n;k_n,j}\big|$ is minimized, which gives the following optimization problem:
\begin{align}
   \min_{\lambda_{t_1},\ldots,\lambda_{t_{N-1}}} & \quad \sum_{i=0}^{N-1} \tilde{\varepsilon}_{t_i}\cdot \big|\sum_{n-k_n+j=i} w_{n;k_n,j}\big| \nn \\
    \text{s.t.} & \quad  \lambda_{t_{n+1}} > \lambda_{t_n}, \text{ for } n = 0, 1,\ldots, N-1,\label{eq:steps_opt_problem}
\end{align}
where $\lambda_{t_0} = \lambda_{T}$ and $\lambda_{t_N} = \lambda_{\epsilon}$ are fixed. Note that both $\tilde{\varepsilon}_{t_i}$ and $w_{n;k_n,j}$ are functions dependent on $\lambda_{t_1},\ldots,\lambda_{t_{N-1}}$. More specifically, for any sampling algorithm that uses the local approximation polynomials $\bm{\calP}_{n; k_n-1}(\lambda)$ of the form~\eqref{eq:calP_local}, the weights $w_{n;k_n,j}$ can be easily calculated from~\eqref{eq:calc_weights}. In addition, as mentioned above, we can set $\tilde{\varepsilon}_{t_i}$ to be $\frac{\sigma_{t_i}^p}{\alpha_{t_i}}$ for some non-negative integer $p$,\footnote{We find in the experiments that setting $p=1$ is a good choice for pixel-space generation, and setting $p=2$ is appropriate for latent-space generation.} which can be represented as a functions of $\lambda_{t_i}$. Furthermore, the optimization problem in~\eqref{eq:steps_opt_problem} can be approximately solved highly efficiently using the constrained trust region method.\footnote{Such an algorithm has been implemented in standard Python libraries such as scipy. See~\url{https://docs.scipy.org/doc/scipy/tutorial/optimize.html##defining-linear-constraints}.} 
Our method is summarized in Algorithm \ref{alg:opt_steps}. 
\iffalse
\begin{algorithm}[h]
  \caption{\textit{Finding the time steps via~\eqref{eq:steps_opt_problem}}}\label{alg:opt_steps}
  \begin{algorithmic}[1]
    \Require Number of time steps $N$, initial time of sampling $T$, end time of sampling $\epsilon$, any sampling algorithm that is characterized by local polynomials $\bm{\calP}_{n; k_n-1}(\lambda)$ of the form~\eqref{eq:calP_local}, the function $\tilde{\varepsilon}_t := \frac{\sigma_{t}^p}{\alpha_{t}}$ with a fixed positive integer $p$ for score approximation, the initial values $\tilde{\lambda}_{t_1},\ldots,\tilde{\lambda}_{t_{N-1}}$
    \State Set $\lambda_{t_0} = \lambda_{T}$ and $\lambda_{t_N} = \lambda_{\epsilon}$ and calculate $\tilde{\varepsilon}_{t_i}$ for $i = 0,1,,\ldots,N-1$
    %\State Find $\lambda_{t_1}< \lambda_{t_2}<\ldots<\lambda_{t_{N-1}}$ by optimizing Equation \eqref{eq:steps_opt_problem} with constrained trust region method. 
    \State Calculate the weights $w_{n;k_n,j}$ from~\eqref{eq:calc_weights}.
    \State Solve the optimization problem in~\eqref{eq:steps_opt_problem} via the constrained trust region method, using the initial values $\tilde{\lambda}_{t_1},\ldots,\tilde{\lambda}_{t_{N-1}}$.\\
    \textbf{Return}: Optimized $\lambda$ (or equivalently, time) steps $\hat{\lambda}_{t_1},\ldots,\hat{\lambda}_{t_{N-1}}$.
  \end{algorithmic}
\end{algorithm}
\fi

\begin{algorithm}[t]
\caption{\textit{Finding the time steps via~\eqref{eq:steps_opt_problem}}}\label{alg:opt_steps}
{\bf Input}: Number of time steps $N$, initial time of sampling $T$, end time of sampling $\epsilon$, any sampling algorithm that is characterized by local polynomials $\bm{\calP}_{n; k_n-1}(\lambda)$ of the form~\eqref{eq:calP_local}, the function $\tilde{\varepsilon}_t := \frac{\sigma_{t}^p}{\alpha_{t}}$ with a fixed positive integer $p$ for score approximation, the initial values of $\lambda_{t_1},\ldots,\lambda_{t_{N-1}}$ \\ 
\textbf{1}: Set $\lambda_{t_0} = \lambda_{T}$ and $\lambda_{t_N} = \lambda_{\epsilon}$ and calculate $\tilde{\varepsilon}_{t_i}$ for $i=0,1,\ldots,N-1$ \\
\textbf{2}: Calculate the weights $w_{n;k_n,j}$ from~\eqref{eq:calc_weights} \\
\textbf{3}: Solve the optimization problem in~\eqref{eq:steps_opt_problem} via the constrained trust region method\\
{\bf Output}: Optimized $\lambda$ (or equivalently, time) steps $\hat{\lambda}_{t_1},\ldots,\hat{\lambda}_{t_{N-1}}$
\end{algorithm}

\iffalse
\begin{algorithm}[h]
  \caption{\textit{Finding the time steps via~\eqref{eq:steps_opt_problem}}}\label{alg:opt_steps}
  \begin{algorithmic}[1]
    \Require Number of time steps $N$, initial time of sampling $T$, end time of sampling $\epsilon$, any sampling algorithm that is characterized by local polynomials $\bm{\calP}_{n; k_n-1}(\lambda)$ of the form~\eqref{eq:calP_local}, the function $\tilde{\varepsilon}_t := \frac{\sigma_{t}^p}{\alpha_{t}}$ with a fixed positive integer $p$ for score approximation
    \State Set $\lambda_{t_0} = \lambda_{T}$ and $\lambda_{t_N} = \lambda_{\epsilon}$ and calculate $\tilde{\varepsilon}_{t_i}$ for $i = 0,1,\ldots,N-1$
    %\State Find $\lambda_{t_1}< \lambda_{t_2}<\ldots<\lambda_{t_{N-1}}$ by optimizing Equation \eqref{eq:steps_opt_problem} with constrained trust region method. 
    \State Calculate the weights $w_{n;k_n,j}$ from~\eqref{eq:calc_weights}.
    \State Solve the optimization problem in~\eqref{eq:steps_opt_problem} via the constrained trust region method.\\
    \textbf{Return}: Optimized $\lambda$ (or equivalently, time) steps $\hat{\lambda}_{t_1},\ldots,\hat{\lambda}_{t_{N-1}}$.
  \end{algorithmic}
\end{algorithm}
\fi
\section{Experiments}
\label{sec:exp}

\begin{table*}[t]
\centering

\begin{tabular}{lcccccccc}
\toprule
Methods \textbackslash NFEs & 5 & 6 & 7 & 8 & 9 & 10 & 12 & 15\\
\midrule
DPM-Solver++ with uniform-$\lambda$ & 29.22 & 13.28 & 7.18 & 5.12 &  4.40 & 4.03 & 3.45 & 3.17 \\
\midrule
DPM-Solver++ with uniform-$t$ & 28.16 & 19.63 & 15.29 & 12.58 &  11.18 & 10.15 & 8.50 & 7.10 \\
\midrule
DPM-Solver++ with EDM & 40.48 & 25.10 & 15.68 & 10.22 &  7.42 & 6.18 & 4.85 & 3.49 \\
\midrule
DPM-Solver++ with optimized step (Ours) & 12.91 & 8.35 & 5.44 & 4.74 &  3.81 & 3.51 & 3.24 & 3.15 \\
\midrule
\midrule
UniPC with uniform-$\lambda$ & 23.22 & 10.33 & 6.18 & 4.80 &  4.19 & 3.87 & 3.34 & 3.17 \\
\midrule
UniPC with uniform-$t$ & 25.11 & 17.40 & 13.54 & 11.33 &  9.83 & 8.89 & 7.38 & 6.18 \\
\midrule
UniPC with EDM & 38.24 & 23.79 & 14.62 & 8.95 &  6.60 & 5.59 & 4.18 & 3.16 \\
\midrule
UniPC with optimized step (Ours) & \bf12.11 & \bf7.23 & \bf4.96 & \bf4.46 &  \bf3.75 & \bf3.50 & \bf3.19 & \bf3.13 \\
\bottomrule
\end{tabular}
\caption{
Sampling quality measured by FID ($\downarrow$) of different discretization schemes of time steps for DPM-Solver++~\citep{lu2023dpmsolver++} and UniPC~\citep{zhao2023unipc} with varying NFEs on CIFAR-10.
}
\vspace{-0.05in}
\label{tab: dpms and unipc}
\end{table*}

\begin{figure*}[th]
\centering
\includegraphics[width=\textwidth]{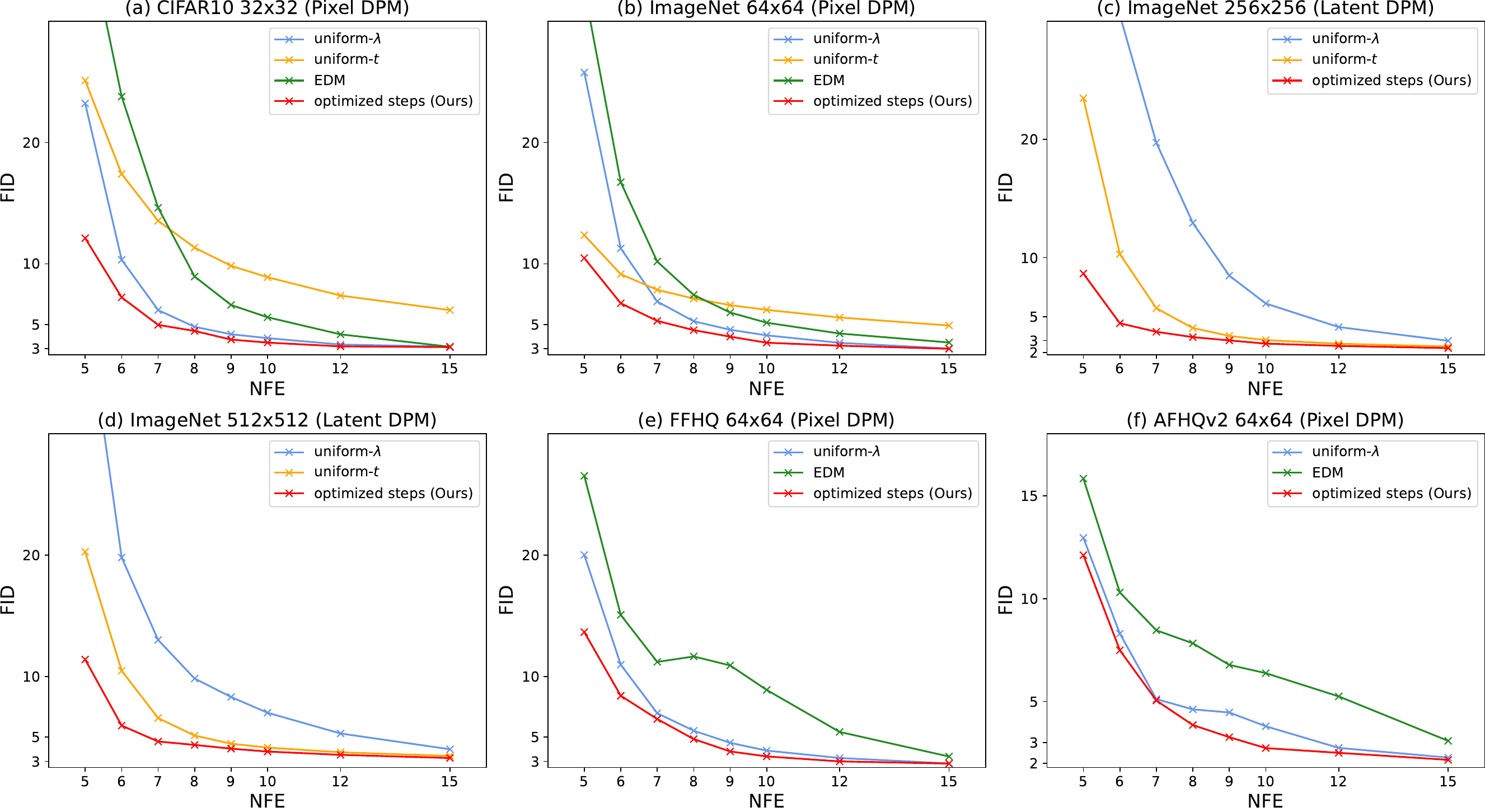}
\caption{Sampling quality measured by FID ($\downarrow$) of different discretization schemes of time steps for UniPC~\citep{zhao2023unipc} with varying NFEs on various DPMs and datasets.}
% It can be seen that our method consistently improves the performance over the baselines in both pixel and latent space diffusion models. 
\label{fig: results}
\end{figure*}

In this section, we demonstrate the effectiveness of our method combined with the currently most popular ODE solvers DPM-Solver++~\citep{lu2023dpmsolver++} and UniPC~\citep{zhao2023unipc}~\footnote{We observe from the experiments that our optimized time steps work effectively for UniPC, although the optimization problem does not take the corrector of UniPC into account, which corresponds to implicit numerical ODE solvers.} for the case of using $5\sim 15$ neural function evaluations (NFEs) with extensive experiments on various datasets. The order is $3$ for both DPM-Solver++ and UniPC. We use Fenchel Inception Distance (FID)~\citep{heusel2017gans} as the evaluation metric to show the effectiveness of our method. Unless otherwise specified, 50K images are sampled for evaluation. The experiments are conducted on a wide range of datasets, with image sizes ranging from 32x32 to 512x512. We also evaluate the performance of various previous state-of-the-art pre-trained diffusion models, including Score-SDE~\citep{song2020score}, ADM~\citep{dhariwal2021diffusion}, EDM~\citep{Karras2022edm}, and DiT~\citep{peebles2022scalable}. which cover pixel-space and latent-space diffusion models, with unconditional generation, conditional generation and generation with classifier-free guidance settings. %Our experiments are mainly performed for UniPC, as it has been shown to achieve superior generation performance compared to DPM-Solver++. Due to the page limit, some additional experimental results have been deferred to the supplementary material. 

\begin{figure*}[!th]
    \centering
    \setlength{\tabcolsep}{1pt}
    \begin{tabular}{c c c c c}
    % Add column names
    \textbf{NFE = 5} & \textbf{Optimized Steps (Ours)} & Uniform-$t$ & EDM & Uniform-$\lambda$\\
     & \textbf{FID = 8.66} & FID = 23.48 & FID = 45.89 & FID = 41.89\\
    \raisebox{5.0\height}{Class label = “coral reef” (973)} &
    \begin{subfigure}{0.16\textwidth}
        \includegraphics[width=\textwidth]{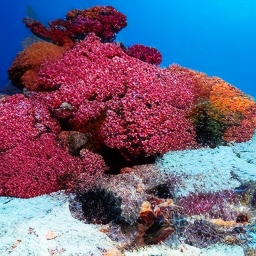}
    \end{subfigure} &
    \begin{subfigure}{0.16\textwidth}
        \includegraphics[width=\textwidth]{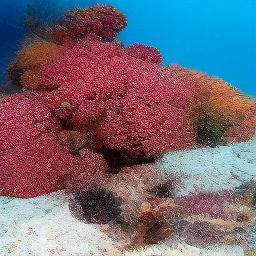}
    \end{subfigure} &
    \begin{subfigure}{0.16\textwidth}
        \includegraphics[width=\textwidth]{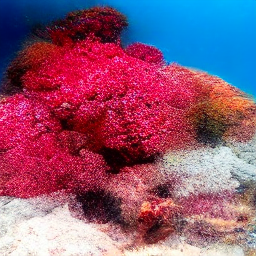}
    \end{subfigure} &
    \begin{subfigure}{0.16\textwidth}
        \includegraphics[width=\textwidth]{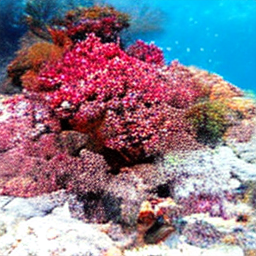}
    \end{subfigure} \\
    \raisebox{5.0\height}{Class label = “golden retriever” (207)} &
    \begin{subfigure}{0.16\textwidth}
        \includegraphics[width=\textwidth]{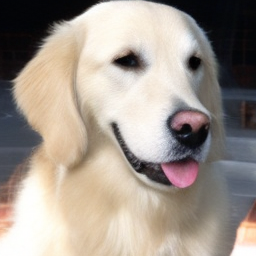}
    \end{subfigure} &
    \begin{subfigure}{0.16\textwidth}
        \includegraphics[width=\textwidth]{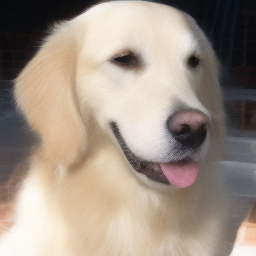}
    \end{subfigure} &
    \begin{subfigure}{0.16\textwidth}
        \includegraphics[width=\textwidth]{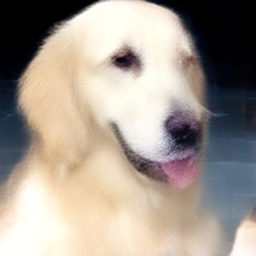}
    \end{subfigure} &
    \begin{subfigure}{0.16\textwidth}
        \includegraphics[width=\textwidth]{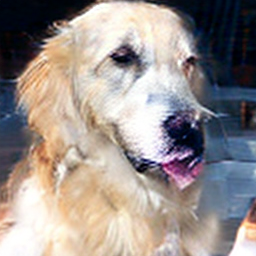}
    \end{subfigure} \\
    % \raisebox{5.0\height}{Class label = “ice cream” (928)} &
    % \begin{subfigure}{0.16\textwidth}
    %     \includegraphics[width=\textwidth]{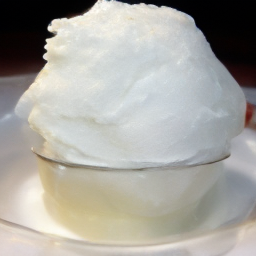}
    % \end{subfigure} &
    % \begin{subfigure}{0.16\textwidth}
    %     \includegraphics[width=\textwidth]{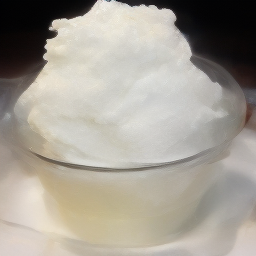}
    % \end{subfigure} &
    % \begin{subfigure}{0.16\textwidth}
    %     \includegraphics[width=\textwidth]{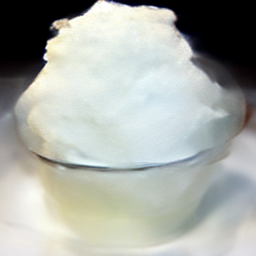}
    % \end{subfigure} &
    % \begin{subfigure}{0.16\textwidth}
    %     \includegraphics[width=\textwidth]{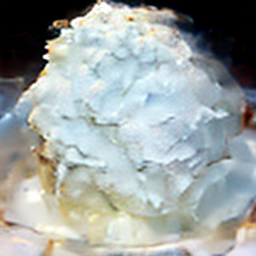}
    % \end{subfigure} \\
    \raisebox{5.0\height}{Class label = “lion” (291)} &
    \begin{subfigure}{0.16\textwidth}
        \includegraphics[width=\textwidth]{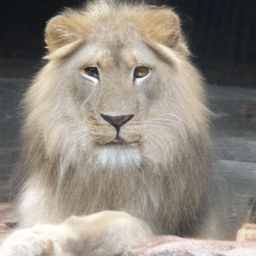}
    \end{subfigure} &
    \begin{subfigure}{0.16\textwidth}
        \includegraphics[width=\textwidth]{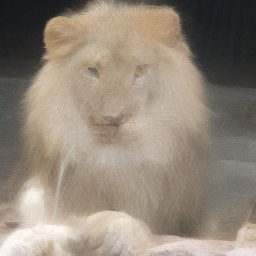}
    \end{subfigure} &
    \begin{subfigure}{0.16\textwidth}
        \includegraphics[width=\textwidth]{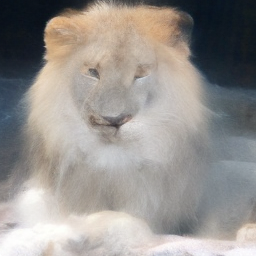}
    \end{subfigure} &
    \begin{subfigure}{0.16\textwidth}
        \includegraphics[width=\textwidth]{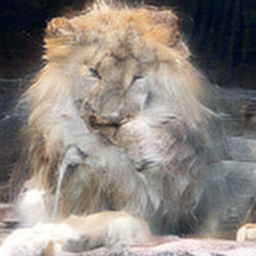}
    \end{subfigure} \\
    % \raisebox{5.0\height}{Class label = “panda” (388)} &
    % \begin{subfigure}{0.16\textwidth}
    %     \includegraphics[width=\textwidth]{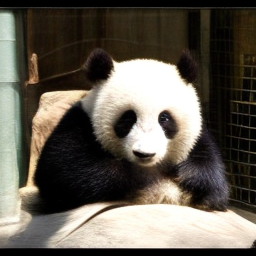}
    % \end{subfigure} &
    % \begin{subfigure}{0.16\textwidth}
    %     \includegraphics[width=\textwidth]{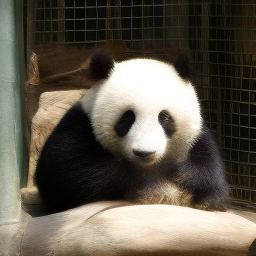}
    % \end{subfigure} &
    % \begin{subfigure}{0.16\textwidth}
    %     \includegraphics[width=\textwidth]{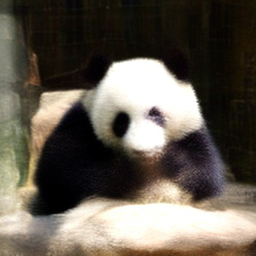}
    % \end{subfigure} &
    % \begin{subfigure}{0.16\textwidth}
    %     \includegraphics[width=\textwidth]{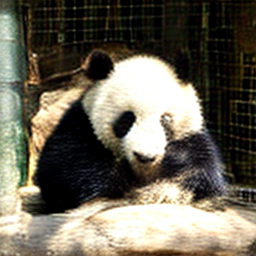}
    % \end{subfigure} \\    
    % \raisebox{5.0\height}{Class label = “macaw” (88)} &
    % \begin{subfigure}{0.16\textwidth}
    %     \includegraphics[width=\textwidth]{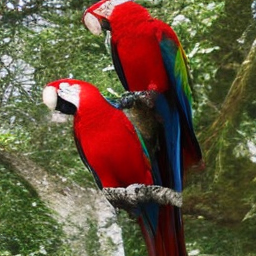}
    % \end{subfigure} &
    % \begin{subfigure}{0.16\textwidth}
    %     \includegraphics[width=\textwidth]{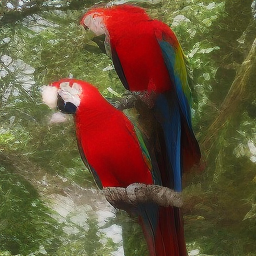}
    % \end{subfigure} &
    % \begin{subfigure}{0.16\textwidth}
    %     \includegraphics[width=\textwidth]{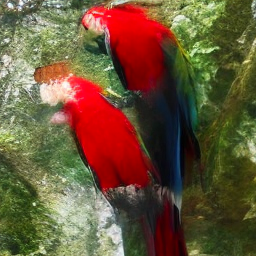}
    % \end{subfigure} &
    % \begin{subfigure}{0.16\textwidth}
    %     \includegraphics[width=\textwidth]{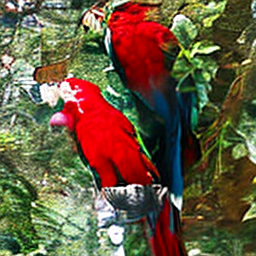}
    % \end{subfigure} \\
    \raisebox{5.0\height}{Class label = “lake shore” (975)} &
    \begin{subfigure}{0.16\textwidth}
        \includegraphics[width=\textwidth]{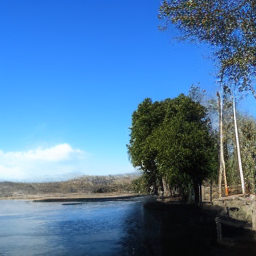}
    \end{subfigure} &
    \begin{subfigure}{0.16\textwidth}
        \includegraphics[width=\textwidth]{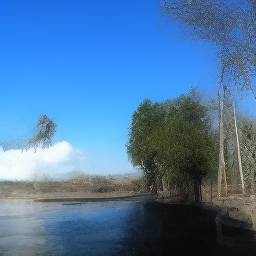}
    \end{subfigure} &
    \begin{subfigure}{0.16\textwidth}
        \includegraphics[width=\textwidth]{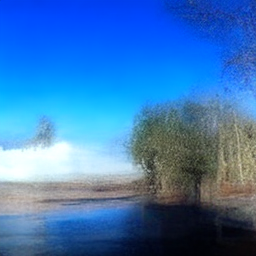}
    \end{subfigure} &
    \begin{subfigure}{0.16\textwidth}
        \includegraphics[width=\textwidth]{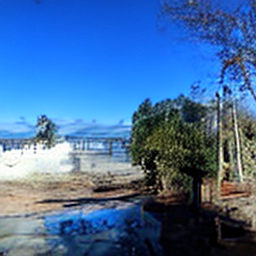}
    \end{subfigure} \\
    \end{tabular}
    \caption{Generated images by UniPC~\citep{zhao2023unipc} with only 5 NFEs for various discretization schemes of time steps from DiT-XL-2 ImageNet 256x256 model~\citep{peebles2022scalable} (with cfg scale $s=1.5$ and the same random seed).}
\label{figure: visual}
\end{figure*}

\subsection{Pixel Diffusion Model Generation}

\paragraph{Results on CIFAR-10 32x32}
For the CIFAR-10 32x32 experiment, We use the \textit{ddpmpp\_deep\_continuous} Score-SDE model from~\citep{song2020score}, which is an unconditional VP-schedule model. The experiment results for UniPC are shown in Fig.~\ref{fig: results}. Our proposed optimized sampling schedule consistently outperforms the other three baseline sampling schedules and achieves state-of-the-art FID results. For example, we improve the FID on CIFAR-10 with only $5$ NFEs from $23.22$ to $12.11$. We also conducted the experiment for DPM-Solver++ on CIFAR-10, see Table~\ref{tab: dpms and unipc}. We found that although our optimized sampling schedule also consistently improves DPM-Solver++, UniPC still performs better no matter which sampling schedule is used. Therefore, in the remaining part of this section, we only report the results for UniPC. Due to the page limit, some additional experimental results have been deferred to the supplementary material.  %For more results on DPM-Solver++, please see the Appendix.

\paragraph{Results on ImageNet 64x64}
For the ImageNet 64x64 experiment, we use the \textit{64x64 diffusion} model from ADM~\citep{dhariwal2021diffusion}, which is a conditional VP-schedule model. The experimental results are shown in Fig.\ref{fig: results}. It can be noticed that among the baseline methods, the uniform-$t$ performs the best for NFEs $5\sim 6$, and uniform-$\lambda$ performs best for NFEs $>7$. This indicates that the previous baseline sampling schedules may be far from optimal. Our proposed optimized sampling schedule consistently outperforms the other three baseline sampling schedules and achieves state-of-the-art FID results. For example, we achieved the FID of $10.47$ on ImageNet 64x64 with only $5$ NFEs.

\paragraph{Results on FFHQ 64x64 and AFHQv2 64x64}
For the FFHQ 64x64 and AFHQv2 64x64 experiments, we use the EDM~\citep{Karras2022edm} unconditional model. The experiment results are shown in Fig.~\ref{fig: results}. For the EDM model, the time ranges from $0.002$ to $80$ during sampling rather than $0$ to $1$, which makes the result of uniform-$t$ significantly worse than the other two baseline discretization schemes. Thus we do not include the results of the uniform-$t$ scheme. In comparison, our proposed optimized sampling schedule consistently outperforms the other two baseline sampling schedules and achieves state-of-the-art FID results. For example, we achieved an FID of $13.66$ on FFHQ 64x64 and an FID of $12.11$ on AFHQv2 64x64 with only $5$ NFEs.

\subsection{Latent Diffusion Model Generation}

\paragraph{Results on ImageNet 256x256 and ImageNet 512x512}
We evaluate our method on the ImageNet dataset using the DiT-XL-2~\citep{peebles2022scalable} model, which is a Vision-Transformer-based model in the latent space of KL-8 encoder-decoder. The corresponding classifier-free guidance scale, namely $s=1.5$, is adopted for evaluation. The experiment results are shown in Fig.~\ref{fig: results}. Our proposed optimized sampling schedule consistently outperforms the other two baseline sampling schedules and achieves state-of-the-art FID results. For example, we improve the FID from $23.48$ to $8.66$ on ImageNet 256x256 and from $20.28$ to $11.40$ on ImageNet 512x512 with only $5$ NFEs. Figure \ref{figure: visual} visually demonstrates the effectiveness of our proposed method. Since these experiments are a bit time-consuming, and the EDM discretization scheme for time steps has been shown to not work well for small NFEs, we mainly use uniform-$t$ and uniform-$\lambda$ schemes as baselines.

\subsection{Running Time Analysis}

We test the running time of our Algorithm~\ref{alg:opt_steps} on \textit{Intel(R) Xeon(R) Gold 6278C CPU @ 2.60GHz}. We report the longest running time observed for performing Algorithm~\ref{alg:opt_steps} for $5,6,7,8,9,10,12,15$ NFEs. The experiment results are shown in Table \ref{tab: running time}. Our algorithm can be pre-computed before sampling and the optimization result can be reutilized. The result shows that our optimization problem can be solved within $15$ seconds for NFEs less than or equal to 15, which is negligible. 
In comparison, the learning to schedule methods \citep{watson2021learning, wang2023learning, liu2023oms, xia2023towards, li2023autodiffusion} usually takes several GPU hours for optimization and the overall performance is comparable to ours.

\begin{table}[t]
\centering

\begin{tabular}{lcccccccc}
\toprule
NFEs & 5 & 6 & 7 & 8 & 9 & 10 & 12 & 15\\
\midrule
Time(s) & 1.9 & 2.3 & 5.3 & 5.9 &  7.8 & 8.8 & 11.0 & 14.1 \\
\bottomrule
\end{tabular}
\caption{
Running time of our optimization algorithm.
}
\vspace{-0.05in}
\label{tab: running time}
\end{table}

\section{Conclusion}
\label{sec:conc}

In this paper, we propose an optimization-based method to find appropriate time steps to accelerate the sampling of diffusion models, and thus generating high-quality images in a small number of sampling steps. We formulate the problem as a surrogate optimization method with respect to the time steps, which can be efficiently solved via the constrained trust region method. Experimental results on popular image datasets demonstrate that our method can be employed in a plug-and-play manner and achieves state-of-the-art sampling performance based on various pre-trained diffusion models. Our work solves an approximated optimization object which can be further improved if a more accurate formulation can be found. 

{
    \small
    \bibliographystyle{ieeenat_fullname}
    \bibliography{main}
}

\clearpage
\setcounter{page}{1}
%\onecolumn
\maketitlesupplementary

\iffalse
\section{Implementation Details}

Among the three baseline discretization schemes for time steps, we observe that when the number of NFEs is small, for pixel-space generation, the uniform-$\lambda$ scheme typically leads to the best generation performance, and for latent-space generation, the uniform-$t$ scheme typically leads to the best generation performance. Therefore, for the constrained trust region method used to approximately solve the optimization problem in~\eqref{eq:steps_opt_problem}, for pixel-space generation, we mainly use the sequence $\{\lambda_{i}\}_{i=1}^{N-1}$ that corresponds to the uniform-$\lambda$ scheme for initialization, while for latent-space generation, we mainly use the sequence $\{\lambda_{i}\}_{i=1}^{N-1}$ that corresponds to the uniform-$t$ scheme for initialization. 
\fi

%For the pixel-space generation,

\section{Additional Results and Experiment Details}

\subsection{More Completed Results for CIFAR-10, ImageNet 64x64, and ImageNet 256x256}

In this subsection, we present the completed quantitative results for CIFAR-10, ImageNet 64x64, and ImageNet 256x256 for the case of combining our optimized steps with DPM-Solver++ and UniPC. More specifically, more completed results for CIFAR-10  are presented in Table~\ref{tab: dpms and unipc_cifar10}. More completed results for ImageNet 64x64 are presented in Table~\ref{tab: dpms and unipc_imagenet64}. Since the cosine schedule that leads to unbounded $\lambda$ at $T=1.0$ is used for the corresponding codebase, we set $T=0.992$ instead of $T=1.0$. More completed results for ImageNet 256x256  are presented in Table~\ref{tab: dpms and unipc_imagenet256}. 

\begin{table*}[hbt!]
\centering
\begin{tabular}{lcccccccc}
\toprule
Methods \textbackslash NFEs & 5 & 6 & 7 & 8 & 9 & 10 & 12 & 15\\
\midrule
DPM-Solver++ with uniform-$\lambda$ & 29.22 & 13.28 & 7.18 & 5.12 &  4.40 & 4.03 & 3.45 & 3.17 \\
\midrule
DPM-Solver++ with uniform-$t$ & 28.16 & 19.63 & 15.29 & 12.58 &  11.18 & 10.15 & 8.50 & 7.10 \\
\midrule
DPM-Solver++ with EDM & 40.48 & 25.10 & 15.68 & 10.22 &  7.42 & 6.18 & 4.85 & 3.49 \\
\midrule
DPM-Solver++ with uniform-$\lambda$-opt & 12.91 & 8.35 & 5.44 & 4.74 &  3.81 & 3.51 & 3.24 & 3.15 \\
\midrule
DPM-Solver++ with uniform-$t$-opt & 12.67 & 8.13 & 5.63 & 4.98 &  5.47 & 3.66 & 4.63 & 3.16 \\
\midrule
DPM-Solver++ with EDM-opt & 12.93 & 8.04 & 5.90 & 8.63 &  5.14 & 4.72 & 4.12 & 3.16 \\
\midrule
\midrule
UniPC with uniform-$\lambda$ & 23.22 & 10.33 & 6.18 & 4.80 &  4.19 & 3.87 & 3.34 & 3.17 \\
\midrule
UniPC with uniform-$t$ & 25.11 & 17.40 & 13.54 & 11.33 &  9.83 & 8.89 & 7.38 & 6.18 \\
\midrule
UniPC with EDM & 38.24 & 23.79 & 14.62 & 8.95 &  6.60 & 5.59 & 4.18 & 3.16 \\
\midrule
UniPC with uniform-$\lambda$-opt & 12.11 & 7.23 & \bf4.96 & \bf4.46 &  \bf3.75 & 3.50 & \bf3.19 & 3.13 \\
\midrule
UniPC with uniform-$t$-opt & 12.10 & \bf7.01 & 5.27 & 4.53 &  4.69 & \bf3.25 & 3.89 & \bf2.78 \\
\midrule
UniPC with EDM-opt & \bf11.91 & 7.19 & 5.62 & 6.62 &  4.53 & 4.12& 3.63 & 2.87 \\
\bottomrule
\end{tabular}
\caption{Sampling quality measured by FID ($\downarrow$) of different discretization schemes of time steps for DPM-Solver++~\citep{lu2023dpmsolver++} and UniPC~\citep{zhao2023unipc} with varying NFEs on CIFAR-10 (with $T=1.0$ and $\epsilon = 0.0005$).}
\vspace{-0.05in}
\label{tab: dpms and unipc_cifar10}
\end{table*}

\begin{table*}[hbt!]
\centering
\begin{tabular}{lcccccccc}
\toprule
Methods \textbackslash NFEs & 5 & 6 & 7 & 8 & 9 & 10 & 12 & 15\\
\midrule
DPM-Solver++ with uniform-$\lambda$ & 21.78 & 11.08  & 7.07 & 5.38 &  4.57 & 4.18 & 3.67 & 3.23 \\
\midrule
DPM-Solver++ with uniform-$t$ & 20.62 & 14.32 & 10.83 & 8.83 &  7.50 & 6.65 & 5.53 & 4.61 \\
\midrule
DPM-Solver++ with EDM & 25.72 & 15.23 & 9.80 & 7.10 &  5.63 & 4.83 & 3.98 & 3.41 \\
\midrule
DPM-Solver++ with uniform-$\lambda$-opt & 13.98 & 8.28 & 5.66 & 4.46 &  4.20 & 3.81 & 3.56 & 3.22 \\
\midrule
DPM-Solver++ with uniform-$t$-opt & 13.98 & 8.28 & 5.51 & \bf4.39 & \bf3.88 & 3.59 & 3.71 & 3.76 \\
\midrule
DPM-Solver++ with EDM-opt & 13.98 & 8.28 & 5.57 & 4.92 &  5.65 & 4.85 & 3.83 & 3.29 \\
\midrule
\midrule
UniPC with uniform-$\lambda$ & 25.77 & 11.27 & 6.89 & 5.26 &  4.56 & 4.10 & 3.48 & 3.01 \\
\midrule
UniPC with uniform-$t$ & 12.36 & 9.14  & 7.85 & 7.13 &  6.59 & 6.20 & 5.57 & 4.91 \\
\midrule
UniPC with EDM & 32.65 & 16.72 & 10.18 & 7.43 &  5.98 & 5.14 & 4.25 & 3.51 \\
\midrule
UniPC with uniform-$\lambda$-opt & \bf10.47 & \bf6.74 & \bf5.29 & 4.53 &  3.99 & \bf3.49 & \bf3.25 & \bf2.99 \\
\midrule
UniPC with uniform-$t$-opt & \bf10.47 & \bf6.74 & 5.60 & 4.66 &  3.96 & 3.91 & 3.88 & 3.59 \\
\midrule
UniPC with EDM-opt & \bf10.47 & \bf6.74 & 5.39 & 4.98 & 5.02 & 4.30 & 3.53 & 3.18 \\
\bottomrule
\end{tabular}
\caption{Sampling quality measured by FID ($\downarrow$) of different discretization schemes of time steps for DPM-Solver++~\citep{lu2023dpmsolver++} and UniPC~\citep{zhao2023unipc} with varying NFEs on ImageNet 64x64 (with $T=0.992$ and $\epsilon = 0.001$).}
\vspace{-0.05in}
\label{tab: dpms and unipc_imagenet64}
\end{table*}

\begin{table*}[hbt!]
\centering
\begin{tabular}{lcccccccc}
\toprule
Methods \textbackslash NFEs & 5 & 6 & 7 & 8 & 9 & 10 & 12 & 15\\
\midrule
DPM-Solver++ with uniform-$\lambda$ & 38.04 & 20.96  & 14.69 & 11.09 &  8.32 & 6.47 & 4.50 & 3.33 \\
\midrule
DPM-Solver++ with uniform-$t$ & 31.32 & 14.36 & 7.62 & 4.93 &  3.77 & 3.23 & 2.78 & 2.51 \\
\midrule
DPM-Solver++ with EDM & 65.82 & 25.19 & 11.17 & 7.50 &  6.98 & 12.46 & 6.54 & 4.03 \\
\midrule
DPM-Solver++ with uniform-$\lambda$-opt & 12.53 & 5.44 & 3.58 & 7.54 &  5.97 & 4.12 & 3.61 & 3.36 \\
\midrule
DPM-Solver++ with uniform-$t$-opt & 12.53 & 5.44 & 3.89 & 3.81 & 3.13 & 2.79 & \bf2.55 & 2.39 \\
\midrule
DPM-Solver++ with EDM-opt & 12.53 & 5.44 & 3.95 & 3.79 &  3.30 & 3.14 & 2.91 & 2.44 \\
\midrule
\midrule
UniPC with uniform-$\lambda$ & 41.89 & 30.51 & 19.72 &12.94 &  8.49 & 6.13 & 4.14 & 2.98 \\
\midrule
UniPC with uniform-$t$ & 23.48 & 10.31  & 5.73 & 4.06 &  3.39 & 3.04 & 2.73 & 2.50 \\
\midrule
UniPC with EDM & 45.89 & 21.24 & 15.52 & 14.38 &  14.24 & 12.98 & 8.62 & 4.10 \\
\midrule
UniPC with uniform-$\lambda$-opt & \bf8.66 & \bf4.46 & \bf3.57 & 3.72 &  3.40 & 3.01 & 2.94 & 2.53 \\
\midrule
UniPC with uniform-$t$-opt & \bf8.66 & \bf4.46 & 3.74 & \bf3.29 &  \bf3.01 & \bf2.74 & \bf2.55 & \bf2.36 \\
\midrule
UniPC with EDM-opt & \bf8.66 & \bf4.46 & 3.78 & 3.34 & 3.14 & 3.22 & 2.96 & 2.38 \\
\bottomrule
\end{tabular}
\caption{Sampling quality measured by FID ($\downarrow$) of different discretization schemes of time steps for DPM-Solver++~\citep{lu2023dpmsolver++} and UniPC~\citep{zhao2023unipc} with varying NFEs on ImageNet 256x256 (with $T=1.0$ and $\epsilon = 0.001$).}
\vspace{-0.05in}
\label{tab: dpms and unipc_imagenet256}
\end{table*}

\subsection{Additional Quantitative Results}
In this subsection, we also present the quantitative results for FFHQ 64x64 and AFHQv2 64x64, as well as ImageNet 512x512. Since we observed from the quantitative results for CIFAR-10, ImageNet 64x64, and ImageNet 256x256 that for most cases, UniPC outperforms DPM-Solver++, throughout the following, we only present the results for UniPC. As mentioned in Section~\ref{sec:exp}, for the EDM model, the time ranges from $0.002$ to $80$ during sampling rather than $0$ to $1$, which makes the results of the uniform-$t$ scheme significantly worse than those of the other two baseline discretization schemes. Therefore, we do not include the results of the uniform-$t$ scheme for FFHQ 64x64 and AFHQv264x64, for which we use the EDM unconditional model. Additionally, we observed from the results for ImageNet 256x256 that the EDM scheme does not perform well when the number of NFEs is small. As the generation for ImageNet 512x512 is time-consuming, we do not perform experiments for ImageNet 512x512 using the EDM scheme. 

We also observed that for pixel-space generation, the optimized time steps initialized from the uniform-$\lambda$ scheme often lead to the best generation performance, and for latent-space generation, the optimized time steps initialized from the uniform-$t$ scheme often lead to the best generation performance. Therefore, for the pixel-space generation of images for FFHQ 64x64 and AFHQv264x64, we only present the quantitative results for the time steps initialized from the uniform-$\lambda$ scheme. For the latent-space generation of images for ImageNet 512x512, we only present the quantitative results for the time steps initialized from the uniform-$t$ scheme.

The quantitative results for FFHQ 64x64 and AFHQv2 64x64 are presented in Tables~\ref{tab:unipc_ffhq64} and~\ref{tab:unipc_afhq64}, respectively. 

The quantitative results for ImageNet 512x512 are presented in Table~\ref{tab:unipc_imagenet512}.

\begin{figure}[t]
\centering
\includegraphics[width=8cm]{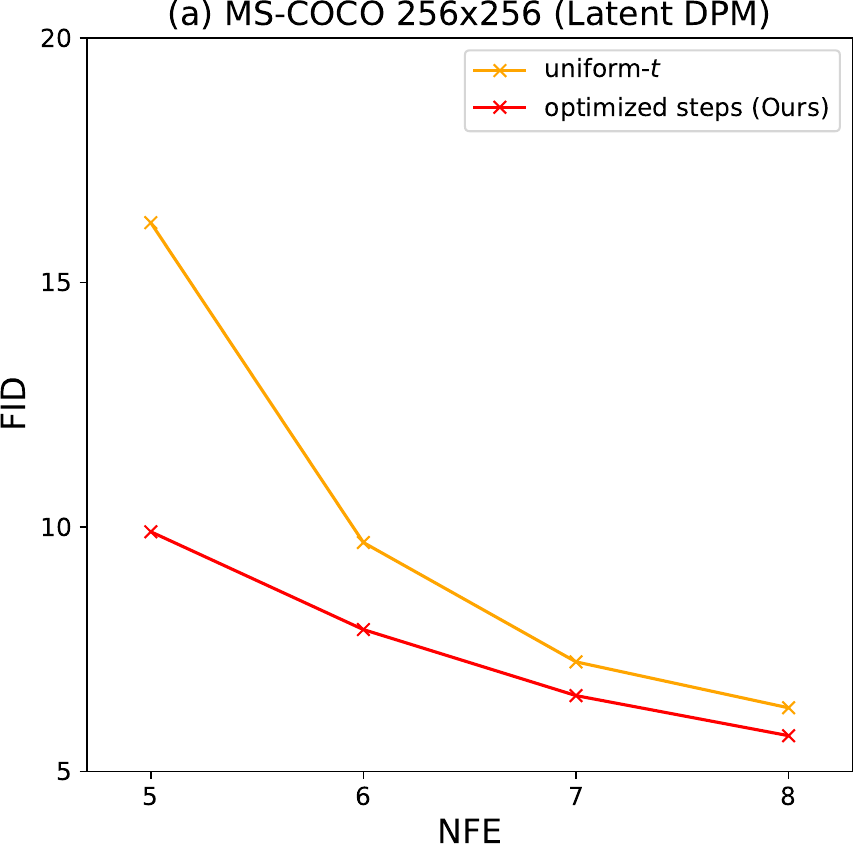}
\caption{Sampling quality measured by FID ($\downarrow$) of different discretization schemes of time steps for UniPC~\citep{zhao2023unipc} with varying NFEs on MS-COCO 256x256 using PixArt-$\alpha$-256 model~\citep{chen2023pixartalpha} (with cfg scale $s=2.5$).}
\label{fig: results_coco}
\end{figure}

\begin{table*}[hbt!]
\centering
\begin{tabular}{lcccccccc}
\toprule
Methods \textbackslash NFEs & 5 & 6 & 7 & 8 & 9 & 10 & 12 & 15\\
\midrule
UniPC with uniform-$\lambda$ & 20.02 & 10.97 & 6.97 & 5.53 &  4.53 & 3.89 & 3.28 & 2.83 \\
\midrule
UniPC with EDM & 26.54 & 15.07 & 11.20 & 11.65 &  10.91 & 8.89 & 5.43 & 3.40 \\
\midrule
UniPC with uniform-$\lambda$-opt & \bf13.66 & \bf8.41 & \bf6.49 & \bf4.84 &  \bf3.82 & \bf3.41 & \bf3.00 & \bf2.82 \\
%\midrule
%UniPC with EDM-opt & \bf13.32 & 9.48 & 7.24 & 5.87 & 6.70 & 5.53 & 3.78 & 3.00 \\
\bottomrule
\end{tabular}
\caption{Sampling quality measured by FID ($\downarrow$) of different discretization schemes of time steps for UniPC with varying NFEs on FFHQ 64x64 (with $T=80$ and $\epsilon = 0.002$).}
\vspace{-0.05in}
\label{tab:unipc_ffhq64}
\end{table*}

\begin{table*}[hbt!]
\centering
\begin{tabular}{lcccccccc}
\toprule
Methods \textbackslash NFEs & 5 & 6 & 7 & 8 & 9 & 10 & 12 & 15\\
\midrule
UniPC with uniform-$\lambda$ & 12.95 & 8.30 & 5.12 & 4.62 &  4.47 & 3.80 & 2.75 & 2.28 \\
\midrule
UniPC with EDM & 15.83 & 10.30 & 8.46 & 7.83 &  6.78 & 6.38 & 5.25 & 3.09 \\
\midrule
UniPC with uniform-$\lambda$-opt & \bf12.11 & \bf7.49 & \bf5.05 & \bf3.86 &  \bf3.27 & \bf2.74 & \bf2.51 & \bf2.17 \\
%\midrule
%UniPC with EDM-opt & 14.04 & \bf7.06 & 7.19 & 7.34 & 4.93 & 3.69 & 3.10 & 2.64 \\
\bottomrule
\end{tabular}
\caption{Sampling quality measured by FID ($\downarrow$) of different discretization schemes of time steps for UniPC with varying NFEs on AFHQv2 64x64 (with $T=80$ and $\epsilon = 0.002$).}
\vspace{-0.05in}
\label{tab:unipc_afhq64}
\end{table*}

\begin{table*}[hbt!]
\centering
\begin{tabular}{lcccccccc}
\toprule
Methods \textbackslash NFEs & 5 & 6 & 7 & 8 & 9 & 10 & 12 & 15\\
\midrule
UniPC with uniform-$\lambda$ & 41.14 & 19.81 & 13.01 & 9.83 &  8.31 & 7.01 & 5.30 & 4.00 \\
\midrule
UniPC with uniform-$t$ & 20.28 & 10.47  & 6.57 & 5.13 &  4.46 & 4.14 & 3.75 & 3.45 \\
\midrule
%UniPC with uniform-$\lambda$-opt & \bf11.40 & \bf5.95 & 4.82 & 4.68 &  6.93 & 6.01 & 4.87 & 3.96 \\
%\midrule
UniPC with uniform-$t$-opt & \bf11.40 & \bf5.95 & \bf4.64 & \bf4.36 &  \bf4.05 & \bf3.81 & \bf3.54 & \bf3.28 \\
\bottomrule
\end{tabular}
\caption{Sampling quality measured by FID ($\downarrow$) of different discretization schemes of time steps for UniPC with varying NFEs on ImageNet 512x512 (with $T=1.0$ and $\epsilon = 0.001$).}
\vspace{-0.05in}
\label{tab:unipc_imagenet512}
\end{table*}

\subsection{Text to Image Generation}
We also evaluate our algorithm on text-to-image generation tasks using the PixArt-$\alpha$ model\footnote{For FID scores, the checkpoint we use is a SAM pretrained model which is then finetuned on COCO.}~\citep{chen2023pixartalpha}. The COCO~\citep{lin2014microsoft} validation set is the standard benchmark for evaluating text-to-image models. We randomly draw 30K prompts from the validation set and report the FID score between model samples generated on these prompts and the reference samples from the full validation set following Imagen~\citep{imagen}. The results are shown in Fig.~\ref{fig: results_coco}. The uniform-$t$ schedule is a widely adopted sampling schedule for latent space text-to-image tasks. In our early experiments, we found the uniform-$t$ schedule significantly outperforms the uniform-$\lambda$ and EDM schedule on latent space text-to-image tasks. Thus we only report the results compared with the uniform-$t$ schedule. Our proposed optimized sampling schedule consistently outperforms the baseline schedules. For example, we improved the FID from $16.22$ to $9.90$ with only $5$NFEs. We also include some samples to demonstrate the effect of our algorithm on text-to-image tasks on Sec.~\ref{sec: additional samples}.

%\textit{If accepted, our code will be made publicly available no later than the camera-ready deadline.}

%Additionally, we observed from the quantitative results for the pixel-space generation of images for CIFAR-10 and ImageNet 64x64 that for most cases, the optimized time steps initialized from the uniform-$\lambda$ scheme often lead to best generation performance. Therefore, for the pixel-space generation of images for FFHQ 64x64 and AFHQ 64x64, we only present the  

\section{Additional Samples}
\label{sec: additional samples}

We include additional samples with only 5 NFEs in this section. In Fig.~\ref{figure: visual appendix}, we include the samples generated by DiT-XL-2~\citep{peebles2022scalable} on ImageNet 256x256. The samples generated by our method have more details and higher quality. In Fig.~\ref{figure: t2i-pre},\ref{figure: t2i-1},\ref{figure: t2i-2} and \ref{figure: t2i-3}, we include the generated samples corresponding to text prompts generated by PixArt-$\alpha$-512 model~\citep{chen2023pixartalpha}. Our generated samples are clearer and more detailed.

\section{Combining with SciRE-Solver}
\label{sec: SciRE-Solver}

In this section, we provide experimental results for the case of combining our optimized steps with the recently proposed SciRE-Solver~\citep{li2023scire}. See Table~\ref{tab: SciRE-Solver}.

\begin{figure*}[!th]
    \centering
    \setlength{\tabcolsep}{1pt}
    \begin{tabular}{c c c c c}
    % Add column names
    \textbf{NFE = 5} & \textbf{Optimized Steps (Ours)} & Uniform-$t$ & EDM & Uniform-$\lambda$\\
     & \textbf{FID = 8.66} & FID = 23.48 & FID = 45.89 & FID = 41.89\\
    \raisebox{5.0\height}{Class label = “ice cream” (928)} &
    \begin{subfigure}{0.16\textwidth}
        \includegraphics[width=\textwidth]{imagenet/icecreamoptim.png}
    \end{subfigure} &
    \begin{subfigure}{0.16\textwidth}
        \includegraphics[width=\textwidth]{imagenet/icecreamtime.png}
    \end{subfigure} &
    \begin{subfigure}{0.16\textwidth}
        \includegraphics[width=\textwidth]{imagenet/icecreamedm.png}
    \end{subfigure} &
    \begin{subfigure}{0.16\textwidth}
        \includegraphics[width=\textwidth]{imagenet/icecreamlogsnr.png}
    \end{subfigure} \\
    \raisebox{5.0\height}{Class label = “panda” (388)} &
    \begin{subfigure}{0.16\textwidth}
        \includegraphics[width=\textwidth]{imagenet/pandaoptim.png}
    \end{subfigure} &
    \begin{subfigure}{0.16\textwidth}
        \includegraphics[width=\textwidth]{imagenet/pandatime.png}
    \end{subfigure} &
    \begin{subfigure}{0.16\textwidth}
        \includegraphics[width=\textwidth]{imagenet/pandaedm.png}
    \end{subfigure} &
    \begin{subfigure}{0.16\textwidth}
        \includegraphics[width=\textwidth]{imagenet/pandalogsnr.png}
    \end{subfigure} \\    
    \raisebox{5.0\height}{Class label = “macaw” (88)} &
    \begin{subfigure}{0.16\textwidth}
        \includegraphics[width=\textwidth]{imagenet/parrotoptim.png}
    \end{subfigure} &
    \begin{subfigure}{0.16\textwidth}
        \includegraphics[width=\textwidth]{imagenet/parrottime.png}
    \end{subfigure} &
    \begin{subfigure}{0.16\textwidth}
        \includegraphics[width=\textwidth]{imagenet/parrotedm.png}
    \end{subfigure} &
    \begin{subfigure}{0.16\textwidth}
        \includegraphics[width=\textwidth]{imagenet/parrotlogsnr.png}
    \end{subfigure} \\
    \end{tabular}
    \caption{Generated images by UniPC~\citep{zhao2023unipc} with only 5 NFEs for various discretization schemes of time steps from DiT-XL-2 ImageNet 256x256 model~\citep{peebles2022scalable} (with cfg scale $s=1.5$ and the same random seed).}
\label{figure: visual appendix}
\end{figure*}

\begin{figure*}[h]
    \centering
    \setlength{\tabcolsep}{1pt}
    \begin{tabular}{c c c}
    % Add column names
    \textbf{NFE = 5} & \textbf{Optimized Steps (Ours)} & Uniform-$t$\\
    \raisebox{2.0\height}{\makecell{\textit{A middle-aged woman of Asian descent,} \\ \textit{her dark hair streaked with silver,} \\ \textit{appears fractured and splintered,} \\ \textit{intricately embedded within}  \\
    \textit{a sea of broken porcelain}}}  &
    \begin{subfigure}{0.26\textwidth}
        \includegraphics[width=\textwidth]{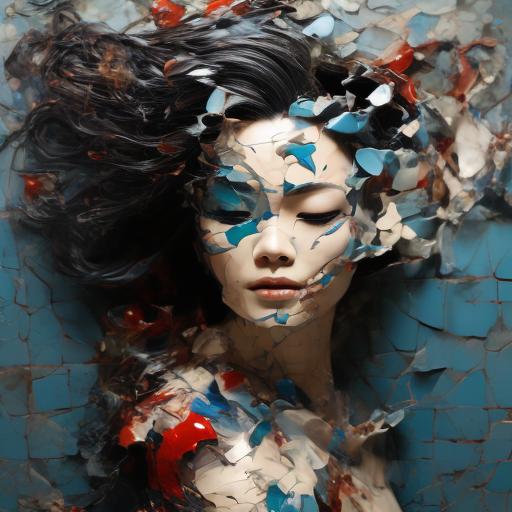}
    \end{subfigure} &
    \begin{subfigure}{0.26\textwidth}
        \includegraphics[width=\textwidth]{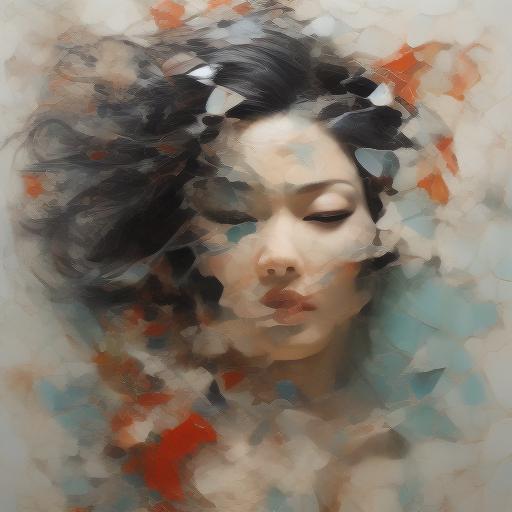}
    \end{subfigure} \\
    \raisebox{8.0\height}{\makecell{\textit{beautiful scene}}} &
    \begin{subfigure}{0.26\textwidth}
        \includegraphics[width=\textwidth]{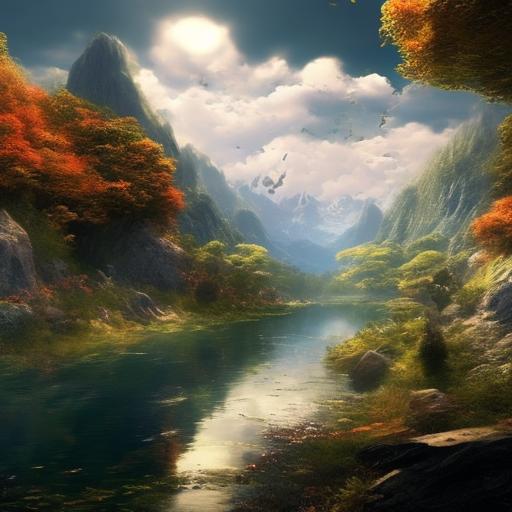}
    \end{subfigure} &
    \begin{subfigure}{0.26\textwidth}
        \includegraphics[width=\textwidth]{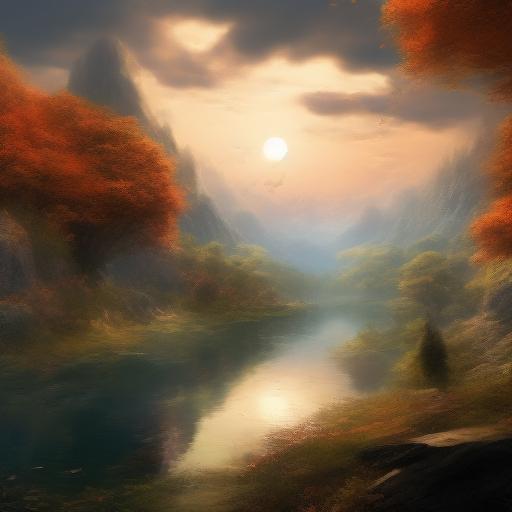}
    \end{subfigure} \\
    \end{tabular}
    \caption{Generated images by UniPC~\citep{zhao2023unipc} with only 5 NFEs for various discretization schemes of time steps from PixArt-$\alpha$-512 model~\citep{chen2023pixartalpha} (with cfg scale $s=2.5$ and the same random seed).}
\label{figure: t2i-pre}
\end{figure*}

\begin{figure*}[h]
    \centering
    \setlength{\tabcolsep}{1pt}
    \begin{tabular}{c c c}
    % Add column names
    \textbf{NFE = 5} & \textbf{Optimized Steps (Ours)} & Uniform-$t$\\
    \raisebox{6.0\height}{\makecell{\textit{A alpaca made of colorful} \\ \textit{building blocks, cyberpunk}}} &
    \begin{subfigure}{0.38\textwidth}
        \includegraphics[width=\textwidth]{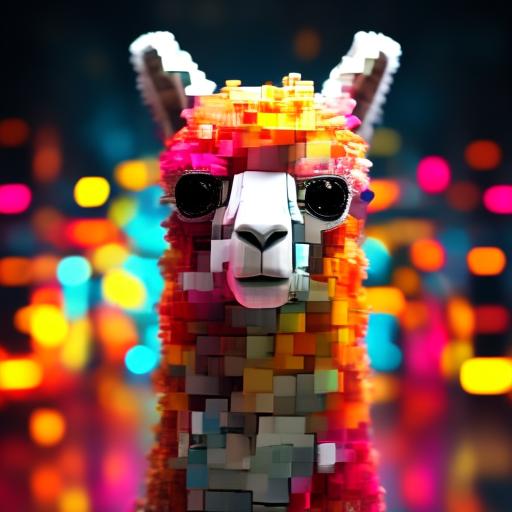}
    \end{subfigure} &
    \begin{subfigure}{0.38\textwidth}
        \includegraphics[width=\textwidth]{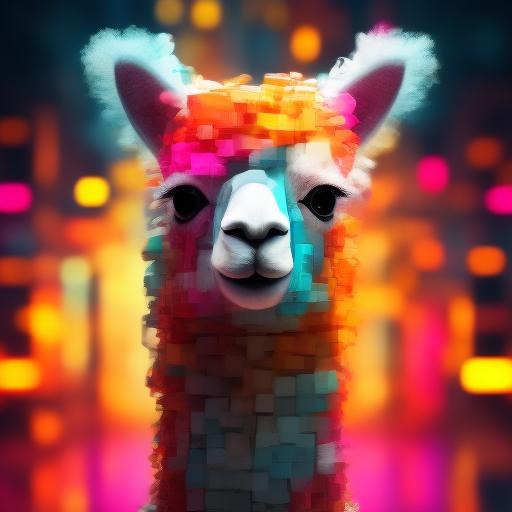}
    \end{subfigure} \\
    \raisebox{10.0\height}{\makecell{\textit{bird's eye view of a city}}} &
    \begin{subfigure}{0.38\textwidth}
        \includegraphics[width=\textwidth]{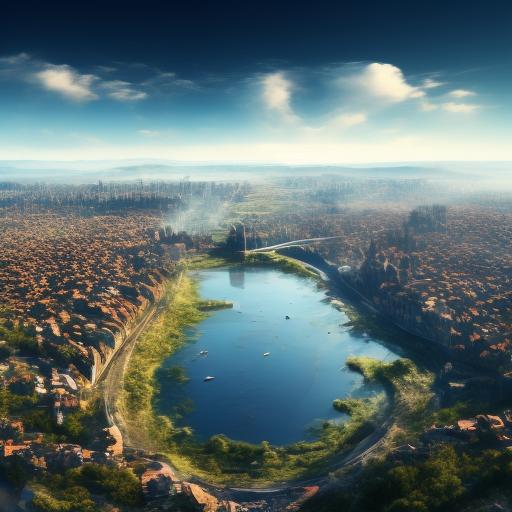}
    \end{subfigure} &
    \begin{subfigure}{0.38\textwidth}
        \includegraphics[width=\textwidth]{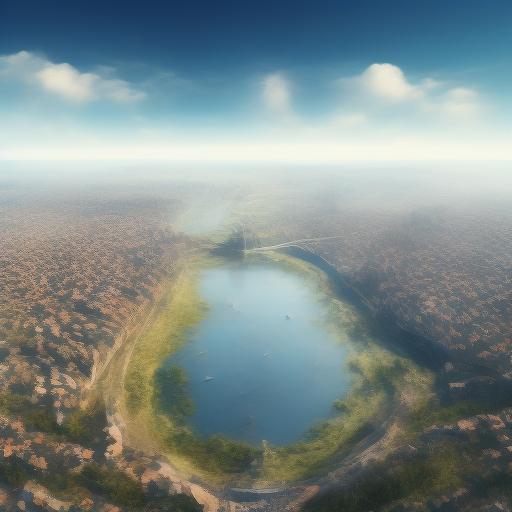}
    \end{subfigure} \\
    \raisebox{4.0\height}{\makecell{\textit{A worker that looks like} \\ \textit{a mixture of cow and horse} \\ \textit{is working hard to type code} }} &
    \begin{subfigure}{0.38\textwidth}
        \includegraphics[width=\textwidth]{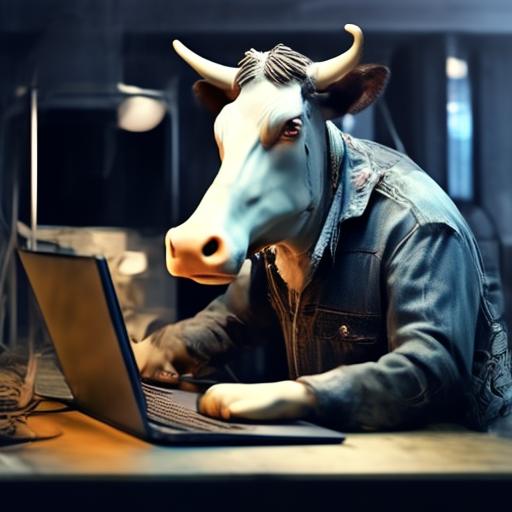}
    \end{subfigure} &
    \begin{subfigure}{0.38\textwidth}
        \includegraphics[width=\textwidth]{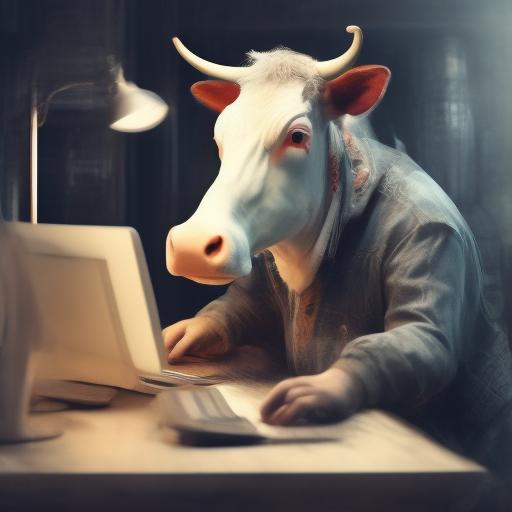}
    \end{subfigure} \\
    \end{tabular}
    \caption{Generated images by UniPC~\citep{zhao2023unipc} with only 5 NFEs for various discretization schemes of time steps from PixArt-$\alpha$-512 model~\citep{chen2023pixartalpha} (with cfg scale $s=2.5$ and the same random seed).}
\label{figure: t2i-1}
\end{figure*}

\begin{figure*}[h]
    \centering
    \setlength{\tabcolsep}{1pt}
    \begin{tabular}{c c c}
    % Add column names
    \textbf{NFE = 5} & \textbf{Optimized Steps (Ours)} & Uniform-$t$\\
    \raisebox{6.0\height}{\makecell{\textit{A transparent sculpture of} \\ \textit{a duck made out of glass}}} &
    \begin{subfigure}{0.38\textwidth}
        \includegraphics[width=\textwidth]{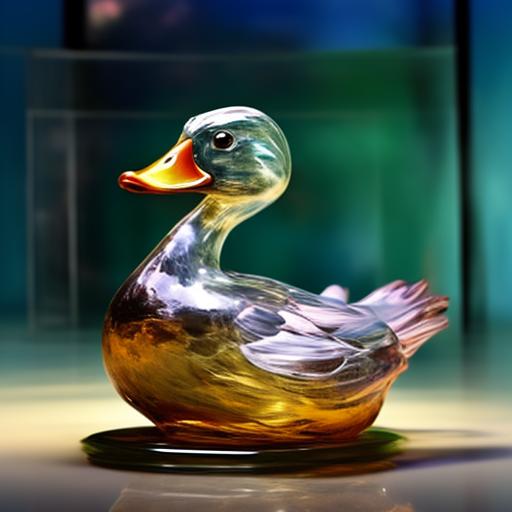}
    \end{subfigure} &
    \begin{subfigure}{0.38\textwidth}
        \includegraphics[width=\textwidth]{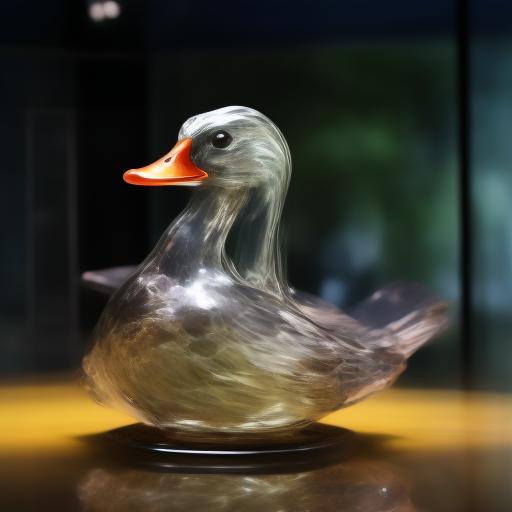}
    \end{subfigure} \\
    \raisebox{3.0\height}{\makecell{\textit{An illustration of a human} \\ \textit{heart made of translucent} \\ \textit{glass, standing on a} \\ \textit{pedestal amidst a stormy sea} }} &
    \begin{subfigure}{0.38\textwidth}
        \includegraphics[width=\textwidth]{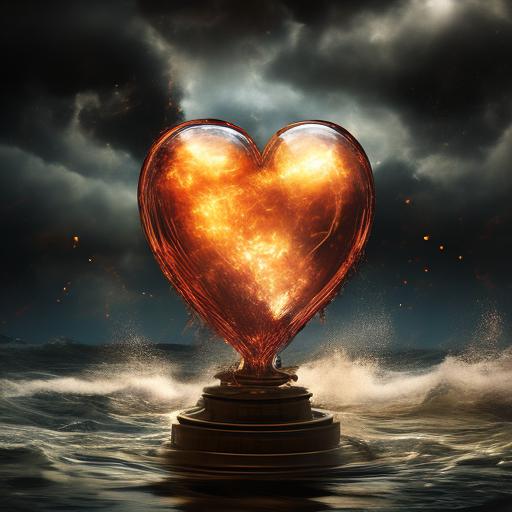}
    \end{subfigure} &
    \begin{subfigure}{0.38\textwidth}
        \includegraphics[width=\textwidth]{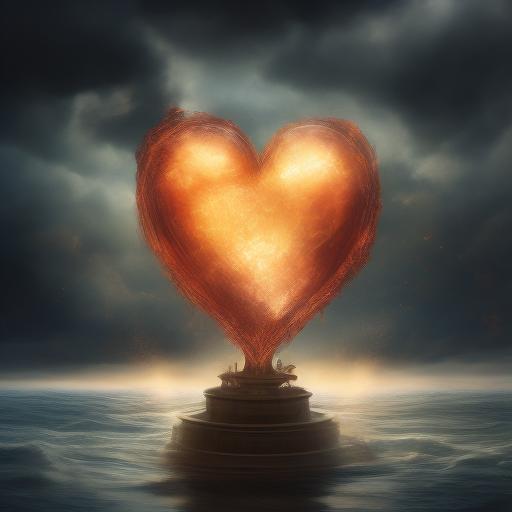}
    \end{subfigure} \\
    \raisebox{6.0\height}{\makecell{\textit{A boy and a girl} \\ \textit{fall in love} }} &
    \begin{subfigure}{0.38\textwidth}
        \includegraphics[width=\textwidth]{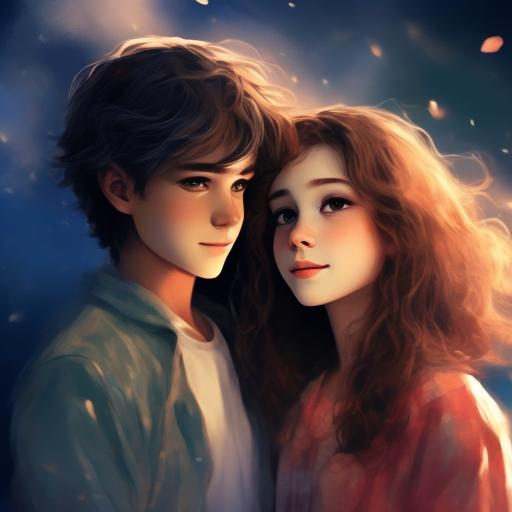}
    \end{subfigure} &
    \begin{subfigure}{0.38\textwidth}
        \includegraphics[width=\textwidth]{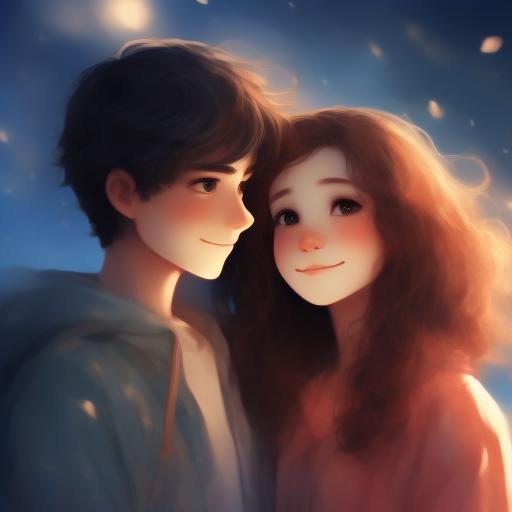}
    \end{subfigure} \\
    \end{tabular}
    \caption{Generated images by UniPC~\citep{zhao2023unipc} with only 5 NFEs for various discretization schemes of time steps from PixArt-$\alpha$-512 model~\citep{chen2023pixartalpha} (with cfg scale $s=2.5$ and the same random seed).}
\label{figure: t2i-2}
\end{figure*}

\begin{figure*}[h]
    \centering
    \setlength{\tabcolsep}{1pt}
    \begin{tabular}{c c c}
    % Add column names
    \textbf{NFE = 5} & \textbf{Optimized Steps (Ours)} & Uniform-$t$\\
    \raisebox{6.0\height}{\makecell{\textit{Luffy from ONEPIECE,} \\ \textit{handsome face, fantasy}}} &
    \begin{subfigure}{0.38\textwidth}
        \includegraphics[width=\textwidth]{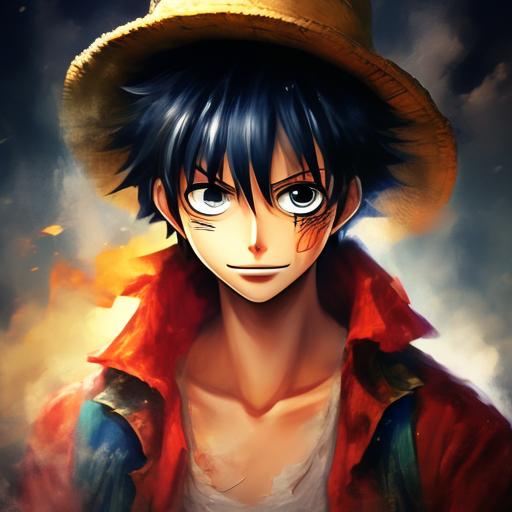}
    \end{subfigure} &
    \begin{subfigure}{0.38\textwidth}
        \includegraphics[width=\textwidth]{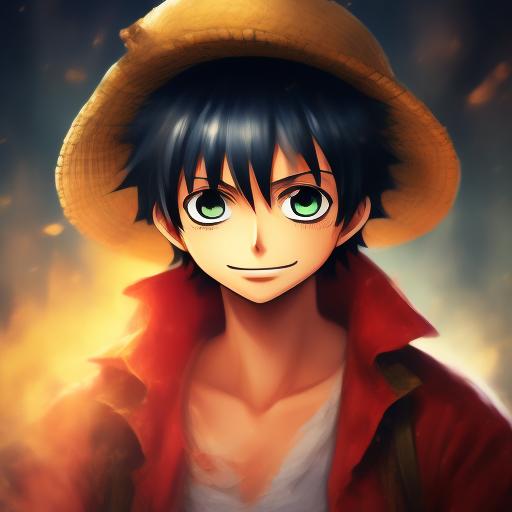}
    \end{subfigure} \\
    \raisebox{4.0\height}{\makecell{\textit{A 2D animation of a folk} \\ \textit{music band composed of} \\ \textit{anthropomorphic autumn leaves} }} &
    \begin{subfigure}{0.38\textwidth}
        \includegraphics[width=\textwidth]{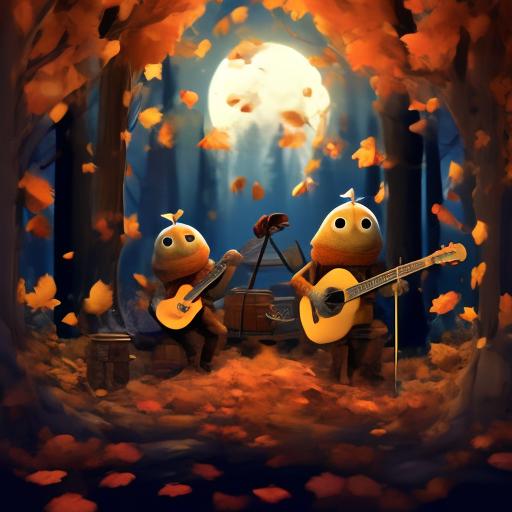}
    \end{subfigure} &
    \begin{subfigure}{0.38\textwidth}
        \includegraphics[width=\textwidth]{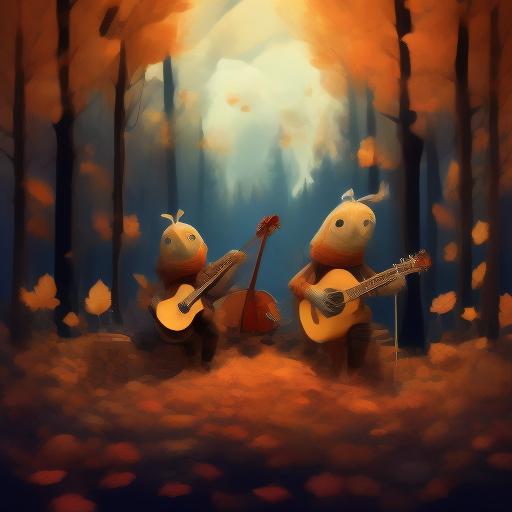}
    \end{subfigure} \\
    \raisebox{3.0\height}{\makecell{\textit{A surreal parallel world} \\ \textit{where mankind avoid extinction} \\ \textit{by preserving nature,} \\ \textit{epic trees, water streams} }} &
    \begin{subfigure}{0.38\textwidth}
        \includegraphics[width=\textwidth]{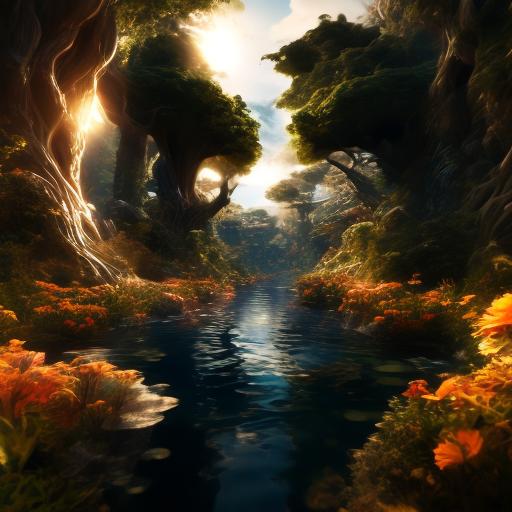}
    \end{subfigure} &
    \begin{subfigure}{0.38\textwidth}
        \includegraphics[width=\textwidth]{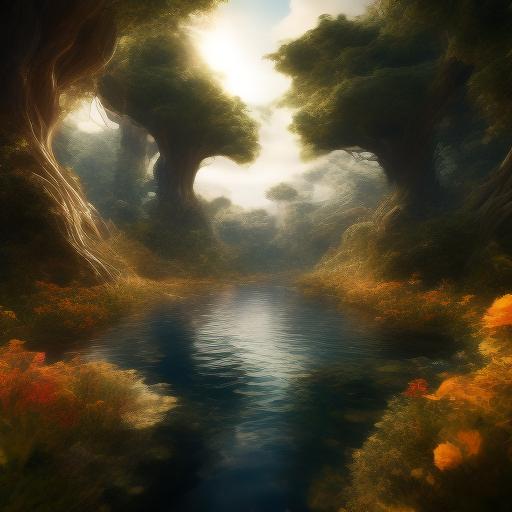}
    \end{subfigure} \\
    \end{tabular}
    \caption{Generated images by UniPC~\citep{zhao2023unipc} with only 5 NFEs for various discretization schemes of time steps from PixArt-$\alpha$-512 model~\citep{chen2023pixartalpha} (with cfg scale $s=2.5$ and the same random seed).}
\label{figure: t2i-3}
\end{figure*}

\begin{table*}[hbt!]
\centering
\begin{tabular}{lcccc}
\toprule
Methods \textbackslash NFEs & 5 & 6 & 7 & 8 \\
\midrule
Uniform-$t$ & 36.18 & 15.93 & 7.57 & 4.48 \\
Uniform-$\lambda$ & 92.80  & 41.19 & 21.08 & 11.25\\
EDM & 90.48& 47.18& 23.32 & 11.10\\
Optimized steps (Ours) & \bf16.07& \bf6.31& \bf4.12& \bf3.66\\
\bottomrule
\end{tabular}
\caption{Sampling quality measured by FID ($\downarrow$) of different discretization schemes of time steps for SciRE-Solver with varying NFEs on ImageNet 256x256 (using the DiT-XL-2 model with cfg = 1.5).}
\vspace{-0.05in}
\label{tab: SciRE-Solver}
\end{table*}

\end{document}